\begin{document}
\newcommand*\rfrac[2]{{}^{#1}\!/_{#2}}
%
% paper title
% can use linebreaks \\ within to get better formatting as desired
% Do not put math or special symbols in the title.
\title{Approximated Robust Principal Component Analysis for Improved General Scene Background Subtraction}
%\title{Efficient Background Subtraction with Low-Rank and Sparse Matrix Decomposition}
%
%
% author names and IEEE memberships
% note positions of commas and nonbreaking spaces ( ~ ) LaTeX will not break
% a structure at a ~ so this keeps an author's name from being broken across
% two lines.
% use \thanks{} to gain access to the first footnote area
% a separate \thanks must be used for each paragraph as LaTeX2e's \thanks
% was not built to handle multiple paragraphs
%

\author{Salehe~Erfanian~Ebadi,~\IEEEmembership{Student~Member,~IEEE,}
        Valia~Guerra~Ones,
        and~Ebroul~Izquierdo,~\IEEEmembership{Senior Member,~IEEE}% <-this % stops a space
%\thanks{M. Shell is with the Department
%of Electrical and Computer Engineering, Georgia Institute of Technology, Atlanta,
%GA, 30332 USA e-mail: (see http://www.michaelshell.org/contact.html).}% <-this % stops a space
%\thanks{J. Doe and J. Doe are with Anonymous University.}% <-this % stops a space
%\thanks{Manuscript received April 19, 2005; revised December 27, 2012.}
}

% The paper headers
%\markboth{Journal of \LaTeX\ Class Files,~Vol.~11, No.~4, December~2012}%
%{Shell \MakeLowercase{\textit{et al.}}: Bare Demo of IEEEtran.cls for Journals}

%\markboth{IEEE Transactions on Image Processing,~Vol.~xx, No.~x, Month~Year}%
%{Shell \MakeLowercase{\textit{et al.}}: Efficient and Unconstrained Background Subtraction with Low-Rank and Sparse Matrix Decomposition}

% The only time the second header will appear is for the odd numbered pages
% after the title page when using the twoside option.
% 
% *** Note that you probably will NOT want to include the author's ***
% *** name in the headers of peer review papers.                   ***
% You can use \ifCLASSOPTIONpeerreview for conditional compilation here if
% you desire.

% If you want to put a publisher's ID mark on the page you can do it like
% this:
%\IEEEpubid{0000--0000/00\$00.00~\copyright~2012 IEEE}
% Remember, if you use this you must call \IEEEpubidadjcol in the second
% column for its text to clear the IEEEpubid mark.

% use for special paper notices
%\IEEEspecialpapernotice{(Invited Paper)}

% make the title area
\maketitle

% As a general rule, do not put math, special symbols or citations
% in the abstract or keywords.
\begin{abstract}
The research reported in this paper addresses the fundamental task of separation of locally moving or deforming image areas from a static or globally moving background. It builds on the latest developments in the field of robust principal component analysis, specifically, the recently reported practical solutions for the long-standing problem of recovering the low-rank and sparse parts of a large matrix made up of the sum of these two components. This article addresses a few critical issues including: embedding global motion parameters in the matrix decomposition model, i.e., estimation of global motion parameters simultaneously with the foreground/background separation task; considering matrix block-sparsity rather than generic matrix sparsity as natural feature in video processing applications; attenuating background ghosting effects when foreground is subtracted; and more critically providing an extremely efficient algorithm to solve the low-rank/sparse matrix decomposition task. The first aspect is important for background/foreground separation in generic video sequences where the background usually obeys global displacements originated by the camera motion in the capturing process. The second aspect exploits the fact that in video processing applications the sparse matrix has a very particular structure, where the non-zero matrix entries are not randomly distributed but they build small blocks within the sparse matrix. The next feature of the proposed approach addresses removal of ghosting effects originated from foreground silhouettes and the lack of information in the occluded background regions of the image. Finally, the proposed model also tackles algorithmic complexity by introducing an extremely efficient ``SVD-free'' technique that can be applied in most background/foreground separation tasks for conventional video processing.
%At the end selected results from a comprehensive evaluation of the proposed techniques is presented and used to validate the introduced approaches.
\end{abstract}

% Note that keywords are not normally used for peerreview papers.
\begin{IEEEkeywords}
Background subtraction, robust principal component analysis, low-rank, sparse, foreground detection, moving camera.
\end{IEEEkeywords}

% For peer review papers, you can put extra information on the cover
% page as needed:
 %\ifCLASSOPTIONpeerreview
% \begin{center} \bfseries EDICS Category: 3-BBND \end{center}
 %\fi
%
% For peerreview papers, this IEEEtran command inserts a page break and
% creates the second title. It will be ignored for other modes.
\IEEEpeerreviewmaketitle

\section{Introduction}
\IEEEPARstart{T}{he separation} of locally moving or deforming image areas from a static or globally moving background is a basic video processing task with manifold applications including automated anomaly detection in video surveillance, face alignment for recognition and authentication, human motion analysis, action recognition, object tracking, video summarization retrieval and editing, and object based video coding. A plethora of algorithms and techniques to achieve this task has been developed since the early days of digital image processing with varied degrees of performance and complexity.
%This task is commonly referred to as foreground/background separation in the literature. Indeed, many algorithms and techniques have been developed since the early days of digital image processing with varied degrees of performance. Background subtraction and foreground detection as the fundamental step in computer vision applications, must be efficiently performed. The algorithms for this task need to be fast as well as be able to yield very accurate results, since any mis-classification at this step propagates to the next stages in a computer vision system and will be hard to recover.
The research presented in this paper also addresses this fundamental task by leveraging and building on recent developments in the field of \textit{Robust Principal Component Analysis} (RPCA). Specifically, the work reported here has been inspired by the critical breakthrough accomplished by Cand\`{e}s \textit{et al.}~\cite{Candes11}, where the authors provided a practical and feasible solution for the long-standing problem of recovering the low-rank and sparse parts of a large matrix made up of the sum of these two components.
%In other words, assuming that a given matrix $A$ consists of the sum of a low-rank and a sparse matrix, then the long standing problem of recovering its low-rank and sparse components became feasible by the results reported in \cite{Candes11}. 
In other words, assuming that a given matrix $A$ consists of the sum of a low-rank and a sparse component, there exists a feasible solution, by which the exact recovery of the original matrix from its decomposed parts becomes possible~\cite{Candes11}.
In the particular context of video processing, the $2$-dimensional matrix $A$ stores pixel information of a video sequence or a set of images by concatenating each frame or image as columns in $A$. Then, the background part of the video sequence is modeled by the low-rank matrix, while the locally deforming parts constitute the sparse matrix component. Cand\`{e}s \textit{et al.}~\cite{Candes11} showed that under certain assumptions, the low-rank/sparse decomposition problem can be solved by means of a convex optimization. In that seminal paper the authors referred to their proposed approach as \textit{Principal Component Pursuit} (PCP).
%decomposing a matrix $A$ into two components such that $A = L + S$, where $L$ is a low-rank matrix and $S$ is a sparse noise matrix. Essentially, $A$ contains the video sequence. So, the background sequence is then modeled by the low-rank subspace $L$ that can gradually change over time, while the moving foreground objects constitute the correlated sparse outliers $S$.
%By solving a convenient convex program called \textit{Principal Component Pursuit (PCP)} and under small assumptions the low-rank part can recover the static unchanging pixels which correspond to the background, and the sparse noise part will contain all the outliers (foreground) and noise in a scene. 
%The authors showed that, under certain trivial assumptions, the problem can be solved by means of a convex program referred to as \textit{Principal Component Pursuit} (PCP). The solutions of this optimization problem are a low-rank part (representing the static samples in the scene, namely the ``background''), and a sparse noise part (representing the foreground or moving objects in the scene plus additive noise).

PCP has led to impressive results in background modeling, foreground detection, removal of shadows and specularities in images, and face alignment for recognition. More importantly, it shows a high potential for improving the performance of RPCA based solutions, motivating the work presented in this paper. An approximated RPCA method for general scene background subtraction is proposed here, that endeavors to overcome some of the limitations of related algorithms and techniques reported over the last few years. In particular, this approach addresses a number of critical issues of RPCA including the following four main aspects: embedding global motion parameters in the model, i.e., estimation of global motion parameters simultaneously with the foreground/background separation task; considering matrix block sparsity rather than generic matrix sparsity as natural feature in video processing applications; attenuating background ghosting effects when foreground is subtracted; and more critically providing an extremely efficient algorithm to solve the low-rank/sparse decomposition task. The first aspect is very important for background/foreground separation in general video sequences where the background usually obeys a global motion originated by the camera motion in the capturing process. Here the proposed model aims at also estimating the global motion parameters while performing the targeted background/foreground separation task. The second aspect exploits the fact that in video processing applications the sparse matrix has a very particular structure; i.e., the non-zero matrix entries are not randomly distributed but they build small blocks within the sparse matrix. The next feature of the proposed approach addresses removal of ghosting effects originated by foreground silhouettes and the lack of information in the occluded regions of the background in the image. Another important aspect relates to the fact that RPCA approaches are computationally expensive involving many \textit{Singular Value Decompositions} (SVD) of large matrices. The proposed model also addresses this issue by introducing an extremely efficient ``SVD-free'' technique that can be applied in most background/foreground separation tasks for conventional video processing.

The rest of this paper is organized as follows. In Section~\ref{sec:RelatedMotiv} a brief survey of related works as well as their limitations is provided. The main contributions and fundamentals of the research reported here are outlined in Section~\ref{sec:Contributions}. Section~\ref{sec:Notation} introduces notation and definitions used in this paper. The RPCA framework is presented in Section~\ref{sec:RPCAFramework}. Next, a thorough discussion on the proposed method is presented in section~\ref{sec:Proposal}. More specifically, a technique called $\tau$-Decomposition is described in~\ref{subsec:GenMovGoDec}, and~\ref{subsec:BlockSparse} discusses the proposed foreground detection algorithm assuming block sparsity. Sections~\ref{subsec:GhostRemove} and~\ref{subsec:SVDFree} introduce a strategy for removal of unwanted absorption of foreground pixels into background, i.e., ghost-attenuation, and a SVD-free algorithm for fast decomposition respectively. At the end selected results from a comprehensive evaluation of the proposed techniques is presented in Section~\ref{sec:ExperimentsRes} which are used to validate the introduced approaches. The paper concludes with few remarks and possibilities for future work.

The preliminary work of this article has appeared in~\cite{erfanian2015ICIP}.
%Section~\ref{sec:Methodology} thoroughly discusses the theoretical notation and definitions used in this work in~\ref{sec:Notation}; and the RPCA framework background is explained in detail in subsection~\ref{sec:RPCAFramework}. Next a thorough discussion on our proposed method method is presented in section~\ref{sec:Proposal}. More specifically a technique called $\tau$-Decomposition is described in subsection~\ref{subsec:GenMovGoDec}; next subsection~\ref{subsec:BlockSparse} discusses the proposal of a foreground detection algorithm with a block-sparse framework. Subsections~\ref{subsec:GhostRemove} and~\ref{subsec:SVDFree} introduce a strategy for removal of unwanted absorption of foreground pixels into background, and a SVD-free algorithm for fast decomposition respectively.
%Finally in section~\ref{sec:ExperimentsRes} numerical and visual evaluations of our proposed methods are demonstrated. The paper will conclude on a few remarks and possibilities for future work.

\subsection{Review of Related Work}
\label{sec:RelatedMotiv}
%The most frequently used models in background subtraction are the statistical methods, due to their robustness to critical situations. 
%Principal Component Analysis provides a robust model of the probability distribution function of the background, but not of the moving objects while they do not have a significant contribution to the model.
During the past decades many methods have been developed to address and solve the problems in background modeling and foreground detection.
% with many models and segmentation strategies; also
Several surveys are also dedicated to this topic~\cite{BouwmansSurvey2011},~\cite{BouwmansSurvey2014}. Here in a short survey a few noteworthy methods are acknowledged.
A well-studied approach is optical flow estimation and object-based motion segmentation, which was addressed in an early work by~\cite{[24-57]OpticalFlow1998}. Another popular approach is the \textit{Markov Random Fields} (MRF) based methods in which the background is modeled and then subtracted using region-level and motion-based information as in~\cite{[24-33]RegionLevelMRF2005} and~\cite{[1-197]ForegroundMRF2012}. Nevertheless, these methods are heavily dependent on the motion estimation algorithm. The segmentation may become unsatisfactory when the foreground objects undergo large displacement. Moreover, if part of the object contour is obscured, the detection result around this part may not be very accurate. A work based on classification algorithms~\cite{[1-206]ASelfOrg2008} used a self-organizing neural network for learning background model motion patterns in videos from stationary cameras in surveillance applications.
%In their work a classifier first determines whether an image block belongs to the background. Then appropriate block-wise updates of the background image are carried out in the second stage, depending upon the results of the classification.
The work by Wang \textit{et al.}~\cite{[4-50]Layers2006} proposed an iterative method to achieve layered motion segmentation, with each layer describing a region's motion, intensity, shape, and opacity. A method named ViBe~\cite{[6-3]ViBe2011} provided a non-parametric background segmentation with feedback and dynamic controllers that achieved higher processing speeds compared to previous works. Efforts have been made to achieve real-time background subtraction as in~\cite{[1-57]RealTime2010} where a GPU implementation with SVMs is reported.
%, which reported to have achieved similar quality of performance to offline methods, for video sequences of small sizes and lengths. 
The problem of bootstrapping in heavily cluttered videos has been addressed by a progressive model in~\cite{[1-72]PatchBased2010} based on an incremental, patch-based method for background initialization. Another popular approach the Bayesian formalism for statistical modeling of background has been studied extensively~\cite{[1-186]Statistical2004}. A recent work~\cite{[1-94]BayesianRPCA2011} built on the success of the robust PCA with introducing a Bayesian framework for classification of foreground and background pixels. However, Bayesian approaches do not fully solve the problem of absorption of a still foreground region into the background.
%However the problem of absorption of a still foreground region into background remains open with Bayesian approaches. A similar approach to PCA, the
\textit{Independent Component Analysis} (ICA) of serialized images from a training sequence, is described in~\cite{[1-313]ICA2009} in which the resulting demixing vector is computed and compared to that of a new image in order to separate the foreground from a reference background image.
%The method is said to be highly robust to indoor illumination changes.
%The method of \textit{Independent component analysis} (ICA) which is another reconstructive subspace model besides PCA for indoor surveillance has been studied in a work by~\cite{[1-313]ICA2009}.
In~\cite{[9-34]SparseVariational2005} a model for separating images into texture and piecewise smooth parts, exploiting variational and sparsity mechanisms was proposed.

%Statistical methods are one of the widely used models in background subtraction, due to their robustness to critical situations. One of the statistical methods is Subspace Learning using Principal Component Analysis (SL-PCA). In this method $n$ consecutive images are used to construct a background model, which is represented by the mean image and the projection matrix comprising the first $k$ significant eigenvectors of PCA. In this way, the foreground segmentation is accomplished by computing the difference between the input image and its reconstruction. However, PCA is very sensitive to outliers, or grossly corrupted observations -- even a single corrupted entry in the data matrix could render the estimated low-rank part arbitrarily far from the true low-rank matrix.

In 1999, Oliver \textit{et al.}~\cite{OliverEtAl2000} proposed to model the background by \textit{Principal Component Analysis} (PCA). They developed the theory of \textit{Robust Subspace Learning} (RSL) for making linear learning methods robust to outliers which are common in realistic training sets.
%Foreground detection is then traditionally achieved by thresholding the difference between the generated background image and the current image.
An issue is that PCA provides a robust model of the probability distribution function, but not of the moving objects since they do not have a significant contribution to the model. The limitations arising from this problem are firstly the size of the foreground object must be small and do not appear in the same location for a long period of time; and secondly the outliers of the foreground objects may be absorbed into the background mode without a mechanism of robust analysis.
Although there are several PCA improvements~\cite{TorreBlack03AFrameworkRSL} that addressed the limitations of classical PCA with respect to outlier and noise -- yielding the field of robust PCA -- these methods may not achieve sufficient performances for applications such as surveillance or general video processing that require fast and very accurate results where the input data might contain corruptions.
% provided by very recent works based on the idea that the data matrix A can be decomposed into two components such that $A=L+S$, where $L$ is a low-rank matrix and $S$ is a sparse part. This decomposition can be obtained by Robust Principal Component Analysis (RPCA) solved via Principal Component Pursuit (PCP)~\cite{Candes11},~\cite{WrightGaneshRaoPengMa09}. The background sequence is then modeled by a low-rank subspace that can gradually change over time (if rank is greater than 1), while the moving foreground objects constitute the correlated sparse outliers. 
Recent advances in rank minimization \cite{Candes11},~\cite{WrightGaneshRaoPengMa09} have shown that it is indeed possible to efficiently and exactly recover low-rank matrices despite significant corruption, using tools from convex programming. Thus paving the way for the development of truly robust PCA based techniques.
%A critical breakthrough in this context was reported by Cand\`{e}s \textit{et al.}~\cite{Candes11}, and also in a work by Wright \textit{et al.}~\cite{WrightGaneshRaoPengMa09}.
%This was until a critical breakthrough in this context was reported by Cand\`{e}s \textit{et al.}~\cite{Candes11}, and later used by Wright \textit{et al.}~\cite{WrightGaneshRaoPengMa09} where they provided practical solutions for the long-standing problem of recovering the low-rank and sparse parts of a large matrix made up of the sum of these two components.
%Thus paving the way for the development of truly robust PCA based techniques.
%Their approach was called Robust Principal Component Analysis (RPCA) and was proven to recover the decomposed matrix exactly via a convex optimization technique called Principal Component Pursuit (PCP).
%The first work on RPCA-PCP developed by~\cite{Candes11}, and later used by~\cite{WrightGaneshRaoPengMa09}, proposed a convex optimization to address the robust PCA problem. Under some assumptions, this approach called Principal Component Pursuit (PCP) recovers the low-rank and the sparse matrices in synthetic and some real data.

%In order to reduce complexity of the algorithm, the approximated RPCA (called GoDec) was proposed~\cite{ZhouTao11GoDec}. The method has the goal of decomposing a matrix into low-rank and sparse components in noisy case.
In a related context, Zhou \textit{et al.}~\cite{ZhouEtAl13DECOLOR} proposed a method called DECOLOR in which they segmented moving objects from an image sequence. They contributed to the field of RPCA by formulating the problem as outlier detection and making use of low-rank modeling to deal with complex and moving background. They incorporated a Markov Random Fields framework to obtain a prior knowledge of the spatial distribution of the outliers (foreground objects). Compared with PCP, DECOLOR uses a non-convex penalty and MRFs for the optimization, which is more greedy to detect outlier regions that are relatively dense and contiguous. However, since DECOLOR minimizes a non-convex energy via alternating optimization, it converges to a local optimum with results depending on initialization of the foreground support, while PCP always minimizes its energy globally.
Recently Liu \textit{et al.}~\cite{liu2015TIP} proposed a structured sparsity based method in which a motion-saliency measurement for norm minimization has been used, to extend the Augmented Lagrange Multiplier method for solving the RPCA. This method obtains promising results in foreground detection for scaled-down low resolution video sequences captured by static cameras in specific environments.

Another important related work, this time addressing the complexity issue, was reported by Zhou and Tao~\cite{ZhouTao11GoDec}. The proposed technique aims at providing an approximate solution of the low-rank/sparse decomposition problem in a noisy case.

\subsection{Contributions of This Paper} \label{sec:Contributions}
%\section{Description of an Efficient and Unconstrained Background Subtraction Technique} \label{sec:Fundamentals}
%This paper addresses some of the issues present in real-world applications with background subtraction. The main contributions of our work can be summarized in four sections.
This paper addresses some of the issues of previous research in the field of RPCA, to better adapt this algorithm to a problem of generic scene background subtraction. The main contributions of this work can be summarized as follows.
\begin{enumerate} \itemsep1pt \parskip0pt \parsep0pt
\item A novel algorithm is proposed which is capable of addressing the background separation problem for globally moving background.
%The algorithm is an extension of the GoDec method~\cite{ZhouTao11GoDec}, and can be seen as a particular case of the general
%%\item We extend the GoDec algorithm to handle moving cameras. This model is a particular case of the general
%model that is described in subsection~\ref{subsec:GenMovGoDec}.
Our approximated RPCA optimization problem called $\tau$-Decomposition includes parameters modeling global motion of background regions and also entails a Gaussian additive noise part.
% of the spatial transformations of the columns of $A$ that allow the decomposition of $A$ as the sum of a low-rank matrix and a sparse matrix plus a i.i.d. Gaussian additive noise part. 
This model's robustness to camera movement and dynamic backgrounds has been tested and demonstrated in this paper.
%\subsection{Block-sparsity in sparse matrix}
\item Enforcing structured (block) sparsity via a $\ell_{2,1}$-norm minimization based on the fact that in real-world scenes the non-zero elements of the sparse component generally appear in blocks. The method adds robustness to the model making it less prone to illumination changes, variations of object size, and noise while achieving enhances in overall segmentation performance.
%the columns of $S$ containing moving objects in each frame have a spatial sparse structure where the zero elements appear generally in blocks (rather than isolated zero elements). Here the columns of the sparse part $S$ are treated as $w \times h$ matrices and the spatial configuration of the zero elements (zero blocks) is exploited. Finally based on these observations a new algorithm for calculating $L$, $S$, and $\tau$ is proposed. This method adds robustness in the model to illumination changes, variations of object size, noise, and camouflage as well as reducing to some extent the problem of bootstrapping and ghosting effect.
%\subsection{Ghost removal (variable initialization strategy)}
\item An efficient initialization method for the low-rank and sparse matrices is proposed. It improves the separation of the moving objects from the background by attenuating the effect of locally moving image areas that usually leak through and remain within the background. This problem is known as ghosting effect.
% some parts of the moving objects that usually remain inside the background (known as ghosting effect) during the iterative process used to solve the optimization problem.
The strategy involves the exploitation of statistical leverages for measuring the contribution of each column to the non-uniformity of the structure of the sparse matrix. In other words, a pseudo-motion saliency map generated from the video frames is used to improve the classification of foreground and background parts during the iterative process.
%\subsection{SVD-free model}
\item A SVD-free approximated RPCA algorithm is proposed for solving the optimization problem. This strategy not only delivers similarly accurate results as its counterpart using SVD, but is much faster, since the most expensive computation in RPCA-based methods is the SVD calculation. In this model the low-rank matrix is assumed to be of rank 1. That is the columns of the low-rank part are assumed to be linearly dependent up to a global transformation, naturally modeling what happens in real world sequences in which background undergoes global transformations only. Here the objective is the reconstruction of the original video sequence calculating the ``global'' frame that describes the globally moving part (background), the $n$ frames (or columns of a sparse matrix $S$) containing only the locally deforming or moving objects, and the parameter vector $\tau$ describing the global motion of the scene usually reflecting camera displacements or zooming.
%Here the objective is the reconstruction of the original video sequence calculating one frame $l$ that describes the static part (background), $n$ frames (as columns of a sparse matrix $S$) containing only the moving objects, and the parameter vector $\tau$ describing the motion of the camera. The cardinality of the matrix $S$ is considered as a thresholding parameter for the amount of details in foreground. %video compression rate.
\end{enumerate}

% needed in second column of first page if using \IEEEpubid
%\IEEEpubidadjcol
%-------------------------------------------------------------------------------------------
\section{Fundamentals}
\label{sec:Methodology}
\subsection{Notation} \label{sec:Notation}
%In this section we describe the models we have developed to address issues with background subtraction in real-world video sequences.
Throughout this document the following notation is used.
%% Saverio's Suggestion
%Consider as input a gray scale (single channel) video signal of $n$ frames each of size $w \times h$, where each pixel has an $8$-bit depth representation. Denote each frame as $mat(A_j)$ where $j = 1, \dots, n$. Consider also a matrix $A$ of size $m \times n$ where $m = w \times h$ such that each column denotes the column $j$ as $A_j$ contains all samples of $mat(A_j)$ rearranged in a column vector. Denote the elements of $A_j$ as $A_{ij}$, $i = 1, \dots, m$.
%% My Definition
%Consider a sequence of gray-scale (single channel) $n$ consecutive video frames $I_{n}$ of size $w \times h$, where each pixel has an $8$-bit depth representation. We form the data matrix $A$ by concatenating each frame in one column of $A$ such that $A$ will be of size $m \times n$ with $m = w \times h$ and $n$ being the number of frames. $A_j$ and $A_{ij}$ denote the $j$-th column and the $ij$-th element of the matrix $A$ respectively. Let us call $mat(A_{j})$ the $w \times h$ matrix formed by the $j$-th column of $A$, which corresponds to the $j$-th frame from the video sequence ($I_j$).
%% Ebroul's Suggestion
Consider a video sequence $I_j$, $j=1,\dots,n$ of $n$ frames of size $w \times h$. For the sake of consistency  the same notation $I_j$ is used to represent both the video frame and the matrix containing it.
Let $A_j \in \mathbb{R}^{m}$ ($m = w \times h$) be the $m$-vector whose entries are the elements of matrix $I_j$. $A_j$ is produced by concatenating all elements of $I_j$ in row-order in a single column or vector containing $m$ elements. Then the matrix $A = [A_1, \dots, A_n] \in \mathbb{R}^{m \times n}$ contains $n$ frames of a video sequence with $m \gg n$. Then, $a_{ij}$ refers to a single element (pixel) $i$ in the frame $j$ of matrix $A$. Here $i=1,\dots,m$ and $j=1,\dots,n$.
%To refer to a single element (pixel) $i$ in the frame $j$ of matrix $A$, $A_{ij}$ with $i=1,\dots,m$ and $j=1,\dots,n$ is used.
Define $mat(\cdot)$ a mapping operation from the $m$-dimensional space into the $w \times h$ matrix as $\mathbb{R}^{m} \xrightarrow{} \mathbb{R}^{w \times h}$, i.e., $mat(A_j)$ is equal to video frame $I_j$.
%Strictly speaking with above definition we have $mat(A_j) = I_j$. 
The following matrix norms are utilized in this paper:
\begin{itemize}
\item Frobenius Norm: $\|A\|_F = \sqrt{\sum_{i,j} A_{ij}^2}$
\item $\ell_1$-norm: $\|A\|_1 = \sum_{i,j} |A_{ij}|$
\item Adapted dissymmetric norm ($\ell_{\alpha,\beta}$-norm):\\$\|A\|_{\alpha,\beta} = (\sum_j (\sum_i A_{ij}^\alpha)^{\frac{\beta}{\alpha}})^{\frac{1}{\beta}}$
\item $\ell_{2,1}$-norm: $\|A\|_{2,1} = \sum_j \|A_j\|_2$ (which is the $\ell_1$-norm of the vector formed by taking the $\ell_2$-norms of the columns of the underlying matrix)
\item Nuclear norm: $\|A\|_* = \sum_i \sigma_i (A)$ (where $\sigma_i (A)$ is the $i$-th largest singular value of $A$)
\end{itemize}

%This paper assumes that the matrix $A$ can be approximated with the sum of a low-rank matrix $L$ and a sparse matrix $S$; i.e. it is assumed that $A$ is in the set of matrices for which a decomposition as such exists.
In this paper for practical reasons, we assume that the matrix $A$ is decomposable; i.e., the matrix $A$ is close to a matrix that can be written as the sum of a low-rank matrix $L$ with singular vectors that are not spiky and a sparse matrix $S$ with a uniform and random pattern of sparsity.

\subsection{RPCA Framework} \label{sec:RPCAFramework}
%Given a large data matrix $A$ the decomposition is defined as $A = L + S$, where $L$ is low-rank and $S$ is a small perturbation (sparse) matrix. The classical PCA seeks the best rank-$k$ estimate of a low-rank matrix $L$ by solving:
The work on RPCA-PCP developed in~\cite{Candes11}, and later used by~\cite{PengiGaneshWrightXu12}, exploits convex optimization to address the robust PCA problem. Under some assumptions, this approach called \textit{Principal Component Pursuit} (PCP) recovers the low-rank and the sparse matrices in synthetic and real data. Given a large data matrix $A$ the decomposition is defined as $A = L + S$ with:
\begin{equation}\label{eq:candes1}
\min \|A - L\| \quad \text{subject to} \quad rank(L) \leq k
\end{equation}
where $L$ is low-rank and $S$ is sparse. The most valid formulation in (\ref{eq:candes1}) uses the $\ell_0$-norm to minimize the energy function:
%The first work on RPCA-PCP developed by~\cite{Candes11}, and later used by~\cite{WrightGaneshRaoPengMa09}, proposed a convex optimization to address the robust PCA problem. Under some assumptions, this approach called Principal Component Pursuit (PCP) recovers the low-rank and the sparse matrices in synthetic and some real data.
%
%The straightforward formulation of the proposed method is to use $\ell_0$-norm to minimize the energy function:
\begin{equation}
\operatorname*{arg\,min} Rank(L) + \lambda \|S\|_0 \quad \text{subject to} \quad A = L + S
\end{equation}

Here $\lambda$ is an arbitrary balancing parameter. The above problem is NP-hard and thus unfeasible for practical applications. To provide a feasible solution, the authors in~\cite{Candes11} proposed to minimize a surrogate model using $\lambda = \frac{1}{\sqrt{\max(m,n)}}$, and the $\ell_1$ and nuclear norms instead. This leads to the convex problem:
\begin{equation}
\operatorname*{arg\,min} \|L\|_{*} + \lambda \|S\|_1 \quad \text{subject to} \quad A = L + S
\end{equation}
Although this formulation leads to a computationally feasible solution, the complexity is still high involving the calculation of many SVDs for a very large matrix.
Existing RPCA algorithms often concentrate on finding exact and meaningful decompositions. However, their complexity is often uncontrollable due to their automatic and iterative solving procedure, which makes them unsuitable for computer vision applications.
%Unfortunately this method may be too complex for a practical usage in some applications. The authors assume that the rank of the low-rank component is not too large and the sparse component is reasonably sparse (uniformly random); in other words this study is limited to the low-rank component being exactly low-rank and the sparse component being exactly sparse. However in realistic and unconstrained video footages these assumptions are not always satisfied. It is worthwhile investigating when either or both these assumptions are relaxed. Another important aspect is to further investigate developing algorithms that have better scalability (i.e. are able to handle videos of different lengths, sizes, and scenarios). It would specifically be useful to scenarios where there are hours of video sequences.
In order to reduce complexity of the algorithm, the approximated RPCA (GoDec) was proposed~\cite{ZhouTao11GoDec}. This simplified version aims at decomposing a particular matrix into its low-rank and sparse components in the presence of noise.
%The method has the goal of decomposing a matrix into low-rank and sparse components in noisy case.
The algorithm estimates the low-rank part $L$ and the sparse part $S$ of a large matrix containing an additive noise part $G$ as:
\begin{equation}
A = L + S + G
\end{equation}
GoDec alternatively assigns the low-rank approximation of $A - S$ to $L$ and the sparse approximation of $A - L$ to $S$. The authors also proved that the objective value below converges to a local minimum, while $L$ and $S$ linearly converge to local optimums.
%\begin{equation}
%\operatorname*{arg\,min}_{L,S} \|A - L - S\|_F^2 \quad \text{such that} \quad rank(L) \leq k, card(S) \leq \kappa
%\end{equation}
\begin{equation}
\min_{L,S} \|A - L - S\|_F^2 \quad \text{such that} \quad rank(L) \leq k, card(S) \leq \kappa
\end{equation}

The model $A = L + S + G$, can handle approximated decomposition in more realistic situations when the exact and unique decomposition does not exist. In the optimization process the rank of $L$ and cardinality of $S$ are fixed. % (to clarify this, when the rank is fixed to the value $k$, it means the function is minimized over the set of matrices with rank $k$. In this case, the rank is not a variable, and rather fixed. So the variable of the minimization problem is the set of matrices $L$ with $rank = k$).
This imposes limitations to decomposition of unconstrained real-world video sequences, because usually the cardinality of $S$ varies and thus must remain flexible. %the amount of information needed to reconstruct a video sequence compared to another varies and therefore the rank of $L$ and cardinality of $S$ must be flexible. 
Moreover, the hard-thresholding towards $S$ requires sorting all its entries' magnitudes and thus is computationally expensive. Later a similar method was proposed in~\cite{SSTZhou2013} by introducing a \textit{Lagrange} formulation as below:
\begin{equation}\label{eq:sstgodec}
\min_{L,S} \|A - L - S\|_{F}^2 + \lambda \|S\|_1 \quad \text{such that} \quad rank(L) \leq k
\end{equation}
The tuning \textit{Lagrangian} parameter $\lambda$, which acts as a soft-threshold value is much more convenient than determining the cardinality of $S$, because the resulting decomposition error is more robust to the change of the \textit{Lagrangian} parameter.
%The main contributions of their work are background modeling and flow-tracking. 
To solve (\ref{eq:sstgodec}) the authors decompose the sparse part as the sum of several low-rank matrices, each one corresponding to objects in the scene sharing the same motion trajectory. However this method does not handle cases with moving cameras (in which parts of the scene move uniformly with the camera motion) or cases where part of the background undergoes a global motion trajectory.% has the same motion trajectory as the foreground such as people on an escalator.

%-------------------------------------------------------------------------------------------
\section{Matrix Decomposition for Video Analysis}
\label{sec:Proposal}

\subsection{$\tau$-Decomposition, An Approximated RPCA for Handling Non-Static Backgrounds} \label{subsec:GenMovGoDec}
%The robust subspace learning models via matrix decomposition in the literature can mostly handle video sequences captured by static cameras. 
%In this section inspired by the work of Cand\`{e}s \textit{et al.}~\cite{Candes11} and the contributions of Peng \textit{et al.}~\cite{PengiGaneshWrightXu12}, and Zhou and Tao~\cite{ZhouTao11GoDec}, 
In this section building on~\cite{Candes11}, ~\cite{PengiGaneshWrightXu12}, and ~\cite{ZhouTao11GoDec}, an approximated RPCA model for handling non-static backgrounds is proposed.
%with the introduction of domain transformations into the model, to compensate for background motion which is caused by camera movement. The main basis for this proposal is that the columns of the matrix $L$ are linearly dependent up to a certain parametric transformation. A non-stationary background does not live in a low-rank subspace. It means that we can recover with parametric transformations only parts of the background that remain in the low-rank subspace.
First, a transformation $\tau$ modeling potential global motion that the foreground region undergoes, is introduced into the optimization task. Basically, it is assumed that the columns of the matrix $L$ are linearly dependent up to a certain parametric transformation.
% an additional variable $\tau$ to model camera movement.
Given a data matrix $A$ whose columns are the frames of a video sequence, captured by a moving camera, the decomposition of matrix $A$ is in the following form:
%Given a $m \times n$ data matrix $A$ whose columns are the frames of a video sequence with size $w \times h$ (meaning $m = w \times h$), captured by a moving camera, we can write its decomposition in the following form:
%\begin{equation} \label{eq:1}
%A \circ \tau = \sum_{i=1}^p L_i + S + G
%\end{equation}
\begin{equation} \label{eq:1}
A \circ \tau = L + S + G
\end{equation}
where $L$ is a low-rank matrix, $S$ is a sparse matrix, and $G$ is a matrix that contains the incomplete information and corruption by outliers in the original video sequence e.g. Gaussian noise. $A_j \circ \tau_j$ denotes the $j$-th frame after transformation parameterized by the vector $\tau_j \in \mathbb{R}^\rho$ where $\rho$ is the number of parameters fully describing the global motion model. Therefore $\rho = 4$ corresponds to similarity, $\rho = 6$ to affine, and $\rho = 8$ to projective transformation.
% matrix whose $i$-th column corresponds to a geometrically transformed frame $i$ of the sequence. 
The $i$-th geometric transformation is comprised of a parameter vector $\tau_i$, $i = 1, \dots, n$ where
% Each geometric transformation has a vector parameter $\tau_i$, with $i=1, \dots, n$. 
%denotes the matrix that contains the original frames after a transformation governed by the transformation parameter vectors $\tau = \tau_1, \dots, \tau_n$. 
different spatial transformations 
%(such as translation, rotation, etc.)
can be considered.
We use the 2D parametric transforms to model the translation, rotation, and planar deformation of the background.
In particular, we also use an affine transformation where each parameter $\tau_i$ is a vector with six coefficients ($\rho = 6$).
%The motion parameters can represent affine, projective, similarity, etc. transformations, 
%such that the matrix $A \circ \tau = [A_1 \circ \tau_1, \dots, A_n \circ \tau_n]$ will be a rank-$1$ matrix. Each parameter vector $\tau_i$ can have $t$ elements based on the type of transformation used. 
Finally, we use the multi-resolution incremental refinement described in~\cite{Szeliski10Book}, to estimate these motion parameters.

Peng \textit{et al.}~\cite{PengiGaneshWrightXu12} proposed a mathematical formulation for equation (\ref{eq:1}) that guarantees a unique solution under some conditions. They presented an algorithm for calculating the matrices $L$, $S$, and $G$ and the motion model parameters $\tau_1, \dots, \tau_n$. The main limitation of this algorithm is its computation cost.
% which makes the model unsuitable for real-world applications.
Here, a computationally-cheaper algorithm is proposed based on an approximated RPCA formulation.

Given the data matrix $A$ and the soft-thresholding parameter $\lambda$ the following convex optimization function recovers a low-rank matrix $L$, a sparse matrix $S$, and the motion parameter vector $\tau$ such that $A \circ \tau \approx L + S$:
\begin{equation} \label{eq:2}
\operatorname*{arg\,min}_{\substack{
            L,S,\tau\\
            rank(L) \leq k}} \|A \circ \tau - L - S\|_F + \lambda \|S\|_1
\end{equation}
%\begin{center}
%$\text{such that} \quad A \circ \tau \approx L + S$
%\end{center}

The first summand guarantees the approximations of the decomposition (minimizing the residual) and the second favors the sparse matrix solution $S$ with many zero elements (i.e. sparse enough). The parameter $\lambda$ controls the contribution of each summand to the function to be minimized. $\lambda$ needs to be manually set depending on the problem to be solved and increases the model's flexibility and generalizability to different scenarios. The model is tested using variations of this parameter in our experiments with the \textit{Receiver Operating Characteristic} (ROC) performance evaluation.
%Following a PCP-based optimization that minimizes a convex energy function via alternating method, this problem always converges to a global optimum.
We use the following alternating strategy minimizing the function for three parameters $L$, $S$, and $\tau$ one at a time with $t=1,2,\dots,p$ until the solution reaches convergence. Observe that the solution of this strategy leads to solving (\ref{eq:2}) by minimizing three reduced problems,
% is reduced to the solution of the three minimization problems below, with 
each being minimized independently from one another. This kind of iterative linearization has a long history in gradient algorithms.
Algorithm~\ref{lst:alg1} below describes the iterative process for the minimization process of the following three sub-problems.
\begin{equation} \label{eq:3}
\tau^t = \operatorname*{arg\,min}_\tau \|A \circ \tau - L^{t-1} - S^{t-1}\|_F^2
\end{equation}
\begin{equation} \label{eq:4}
L^t = \operatorname*{arg\,min}_{rank(L) \leq k} \|A \circ \tau^t - L - S^{t-1}\|_F^2
\end{equation}
\begin{equation} \label{eq:5}
S^t = \operatorname*{arg\,min}_S \|A \circ \tau^t - L^t - S\|_F^2 + \lambda \|S\|_1
\end{equation}

%------------------------------------------------------------------------
%------------------------------------------------------------------------
%------------------------------------------------------------------------
%\subsubsection{Solving the first sub-problem -- equation (\ref{eq:3})} \label{subsubsec:FirstSubProb}
\newtheorem*{subp1}{Solving the first optimization problem -- equation (\ref{eq:3})}
\begin{subp1} \label{subp1:FirstSubProb} \normalfont
In the first optimization problem the parameters $\tau_i$, $i=1, \dots, n$ transform each of the columns of $A$ individually. Therefore $n$ minimization problems must be solved. The $i$-th problem consists of inferring the transformation parameters that transform the $i$-th frame $A_i$ to the image corresponding to the matrix $L_i^{t-1} - S_i^{t-1}$. This problem can be written as a weighted least squares minimization where the solutions $\tau_i$ have a closed-form.
It is well known that the solution of minimum norm of a least squares problem is unique and has a simple closed-form solution in terms of the Singular Value Decomposition of the system matrix. However, the obtained solution is highly unstable with respect to small changes in the data because the rows of the system matrix corresponding to nearby pixels in adjacent frames tend to be quite similar and the system matrix is generally ill-conditioned. There are different strategies for regularizing the solution of an ill-conditioned linear system. Here, the incremental refinement described in~\cite{PengiGaneshWrightXu12} is used, where a local linearization is applied on $A_i \circ \tau_i$ as function of the parameters. The main idea is using an initial approximation for the parameters $\tau_i$, $i = 1, \dots, n$ that is iteratively improved applying a refinement process and assuming a linear local behavior of $A_i \circ \tau_i$ around the initial approximation. For $t = 1$ the initial approximation of the parameters is obtained using the robust multiresolution method for prealignment described in~\cite{odobez1995robust}.
\end{subp1}
%------------------------------------------------------------------------
%------------------------------------------------------------------------
%------------------------------------------------------------------------
%\subsubsection{Solving the second sub-problem -- equation (\ref{eq:4})}
\newtheorem*{subp2}{Solving the second optimization problem -- equation (\ref{eq:4})}
\begin{subp2}  \normalfont
Next step is to calculate the rank-$k$ matrix that is the nearest estimate to the matrix $A \circ \tau^t - S^{t-1}$ with respect to $L^t$ under the current estimate of the parameters $A \circ \tau^t$ and $S^{t-1}$.
%Now we fix $A \circ \tau^t$ and $S^{t-1}$ and the minimization problem to be solved is to calculate the rank-$k$ matrix that is nearest to the matrix $A \circ \tau^t - S^{t-1}$.
The singular value decomposition (SVD) gives a closed-form solution to this problem:
\begin{equation} \label{eq:6}
L^t = \sum_{i=1}^k \sigma_i U_i V_i^T
\end{equation}
%\begin{center}
%$\text{where} \quad \operatorname*{svd}(A \circ \tau^t - S^{t-1}) = U \Sigma V^T$
%\end{center}
where the coefficients $\sigma_i$ and the vectors $U_i$ and $V_i$ ($i = 1,\dots,n$) are the singular values, and the left and right singular vectors of the matrix $A \circ \tau^t - S^{t-1}$, respectively. 
\end{subp2}
%------------------------------------------------------------------------
%------------------------------------------------------------------------
%------------------------------------------------------------------------
%\subsubsection{Solving the third sub-problem -- equation (\ref{eq:5})}
\newtheorem*{subp3}{Solving the third optimization problem -- equation (\ref{eq:5})}
\begin{subp3}  \normalfont
Finally the matrix $S^t$ is updated using the parameter $\lambda$ acting as a tuning parameter in the matrix $A \circ \tau^t - L^t$; i.e., the elements of the matrix $A \circ \tau^t - L^t \leq \lambda$ are considered to be zero.
\end{subp3}

\begin{algorithm}   \footnotesize
  \caption{Approximated RPCA with moving camera}
  \label{lst:alg1}
  \begin{algorithmic}[1]
    \STATE \textbf{Input:} $A$, $k$, $\lambda$, \textit{tol}, \textit{maxIter}
    \STATE \textbf{Output:} $S$, $L$, $\tau$
    \STATE \textit{Standard initialization:} $\tau^0 = 0$, $L^0 = A$, $S^0 = 0$
    %\State 
    \WHILE{${\|A \circ \tau^t - L^t - S^t\|_F^2}/{\|A\|_F^2} > \textit{tol}$
    $\text{or} \quad \textit{t} < \textit{maxIter}$}
    {\begin{enumerate}[label=\arabic*)]
    \item Form the matrix $A \circ \tau$ calculating the parameters $\tau_i^t$ that infer the mapping that transforms the column vector $A_i$ to the $i$-th column vector of the matrix $L^{t-1} + S^{t-1}$.
%    \item Calculate $L^t = \sum_{i=1}^k \sigma_i U_i V_i^T \quad \text{where} \quad \linebreak \operatorname*{svd}(A \circ \tau^t - S^{t-1}) = U \Sigma V^T$.
    \item Calculate $L^t = \sum_{i=1}^k \sigma_i U_i V_i^T \quad \text{where} \quad \operatorname*{svd}(A \circ \tau^t - S^{t-1}) = U \Sigma V^T$.
%    \item Calculate $S^t = \mathcal{P}_\lambda (A \circ \tau^t - L^t) \quad \text{where} \quad \linebreak \mathcal{P}_\lambda (x) = \operatorname*{sign}(x)\max(|x| - \lambda, 0)$.
    \item Calculate $S^t = \mathcal{P}_\lambda (A \circ \tau^t - L^t) \quad \text{where} \quad \mathcal{P}_\lambda (x) = \operatorname*{sign}(x)\max(|x| - \lambda, 0)$.
           \end{enumerate}}
    \ENDWHILE
  \end{algorithmic}
\end{algorithm}
%------------------------------------------------------------------------
%------------------------------------------------------------------------
%------------------------------------------------------------------------
%\subsubsection{Convergence of the iterative process}
\newtheorem*{conv}{Convergence of the iterative process}
\begin{conv} \normalfont
The sequence of values of the objective function $\|A \circ \tau^t - L^t - S^t\|_F^2 + \lambda \|S^t\|_1$, $t = 1,2, \dots, p$ produced by the iterative process is monotonically decreasing for a fixed $\lambda$ converging to a local minimum. The proof is similar to the convergence arguments used by theorem $1$ in~\cite{ZhouTao11GoDec}. The main difference is the addition of a third optimization problem (which involves the parameters of the motion model) that also has a closed-form solution and the values of the sequence keep decreasing in each step.
\end{conv}

\subsection{Foreground Detection with $\ell_{2,1}$-Norm Block-Sparsity}
\label{subsec:BlockSparse}
The formulation of the background modeling/foreground detection problem using the optimization function (\ref{eq:2}), favors solutions where the matrix $S$ is sufficiently sparse. But, this formulation does not take into account the structure of sparsity in $S$, and it does not yield good results when the sparse pattern involves for example clutters of non-zero entries representing foreground objects. Strictly speaking, in real-world video sequences the foreground pixels do not appear as in a sparse matrix at random and scattered; but rather, they appear in regions corresponding to foreground objects in the scene, in groups of pixels that have a structure.% Moreover if the foreground elements are large and occupy a big portion of the image, the matrix is no longer sparse and is rather dense.
%
%In real-world video sequences the foreground pixels do not appear as in a sparse matrix at random and scattered; strictly speaking they appear in regions of pixels that each corresponds to an object in the scene. 

In this section, we propose a mathematical formulation of the problem that favors solutions where the non-zero elements of the matrix $S$ are structured; in other words the non-zero elements appear in non-overlapping blocks (with no pre-specified sizes), where each block can represent the natural shape of a foreground object. In our algorithm the background sequence is again modeled by a low-rank subspace that can gradually change over time; while the moving foreground objects constitute the correlated and contiguous sparse outliers. It is noteworthy that methods that involve a structured sparsity solution such as~\cite{liu2015TIP}, usually impose a block structure by pre-defining a block size for a group of pixels and then perform the norm minimization using a motion-saliency check, that would yield a block structured foreground detection. As a result, the output is always dependent on the tweaked pre-defined block size and as the objects in the video sequence change sizes the foreground support detection becomes more and more unreliable. Here, unlike other block/group/structured sparsity methods our formulation involves solving a $\ell_{2,1}$-norm minimization for the matrix $mat(S_j)$ that corresponds to the sparse part for video frame $I_j$, and the structured sparsity is guaranteed by the norm itself.

In other algorithms exploiting the block-sparsity~\cite{TangNehorai11BlockSparse},~\cite{GuyonBouwmansZahzah12} the matrix $S$ contains mostly zero columns, with several non-zero ones corresponding to foreground elements. In image processing applications this assumption cannot be made (since we assume that the columns of the matrix $S$ correspond to foreground objects in the frames of a video sequence). Assuming that most columns are zero contradicts the definition of sparse matrix. When a whole column in the sparse matrix is zero it means the information in that column is assigned to the low-rank subspace. Moreover, if the video sequence contains foreground objects in all the frames then this assumption does not help.
Instead, it would make sense if the block-sparsity was imposed on the pixels of each video frame rather than a whole column (whole frame) in the matrix $S$. Hence, we solve the minimization problem for block-sparse matrices $mat(S_j)$. In order to rule out ambiguity, in our model the columns of the low-rank matrix $L$ corresponding to the sparse columns are ensured to be zeros. Therefore, this algorithm achieves more robustness than RPCA-PCP~\cite{Candes11} and RPCA-LBD~\cite{TangNehorai11BlockSparse},~\cite{GuyonBouwmansZahzah12} with dynamic backgrounds, varying foreground object sizes, and illumination changes.

Given a data matrix $A$ whose columns are the frames of a video sequence captured by a moving camera and a soft-thresholding parameter $\lambda$, we minimize the following optimization problem that recovers the background and the structured block foreground of the sequence with the matrices $L$ and $S$, respectively.
\begin{equation} \label{eq:8}
\operatorname*{arg\,min}_{\substack{
            S,\tau\\
            rank(L) \leq k}}{\|A \circ \tau - L - S\|_F^2 + \lambda \sum_{j=1}^n \|mat(S_j)\|_{2,1}}
\end{equation}
%\begin{center}
%$\text{such that} \quad A \circ \tau \approx L + S$
%\end{center}
%%%%%%

%\begin{equation} \label{eq:7}
%\operatorname*{arg\,min}_{\substack{
%            L,S,k\\
%            rank(L) \leq k}} \|A \circ \tau - \sum_{i=1}^p L_i - S\|_F + \lambda \|S\|_\#
%\end{equation}
%\begin{equation} \label{eq:7}
%\operatorname*{arg\,min}_{\substack{
%            L,S,k\\
%            rank(L) \leq k}} \|A \circ \tau - L - S\|_F + \lambda \|S\|_\#
%\end{equation}
Other models that involve the $\ell_{2,1}$-norm minimization of the matrix $S$ have been explored in the literature~\cite{TangNehorai11BlockSparse}. Here the $\ell_{2,1}$-norm of the matrix created by the columns of the matrix $S$ is minimized, so that one can use the additional information derived from the spatial positions of the non-zero elements and introduce zero blocks to strategically enforce sparsity.%columns.

In the proposed formulation, the first summand guarantees the approximation of the decomposition (minimizing the residual) and the second favors solutions where the zero elements of $S$ appear in blocks corresponding to some columns of $mat(S_j)$. The problem (\ref{eq:8}) is solved by alternatively solving three optimization problems below for $t = 1,2,\dots,p$ until convergence. Algorithm~\ref{lst:alg2} describes the iterative process for the following minimization process.
\begin{equation} \label{eq:9}
\tau^t = \operatorname*{arg\,min}_\tau \|A \circ \tau - L^{t-1} - S^{t-1}\|_F^2
\end{equation}
\begin{equation} \label{eq:10}
L^t = \operatorname*{arg\,min}_{rank(L) \leq k} \|A \circ \tau^t - L - S^{t-1}\|_F^2
\end{equation}
\begin{equation} \label{eq:11}
S^t = \operatorname*{arg\,min}_S \|A \circ \tau^t - L^t - S\|_F^2 + \lambda \sum_{j=1}^n \|mat(S_j)\|_{2,1}
\end{equation}

The first and second optimization problems are similar to the optimization problems solved in the previous section. The solution of the third problem is obtained applying the following lemma to the matrix $H = A \circ \tau^t - L^t$:
%\subsubsection{solving the third sub-problem -- equation (\ref{eq:11})}
%
%Let $H$ be the matrix $A - L^t$; the following lemma gives the expected solution:
%------------------------------------------------------------------------
%---------------------------------Lemma 1--------------------------------
%------------------------------------------------------------------------
%\subsubsection*{Lemma 1:} \label{subsec:3.2.2}
\newtheorem{lemma}{Lemma}
\begin{lemma}
The $i$-th column of the matrix $mat(E_j)$ that solves the minimization problem
\[E = \operatorname*{arg\,min}_S \|H - S\|_F^2 + \lambda \sum_{j=1}^n \|mat(S_j)\|_{2,1}\]
is given by:
\[\left(mat \left(E_j\right)\right)_i = \left(mat \left(H_j\right)\right)_i \max{\left(0, 1 -  \frac{\lambda}{\|{\left(mat \left(H_j\right)\right)_i}\|_2}\right)}\]
%\[\text{for} \begin{cases}
%   j=1,\dots,n \\
%   i=1,\dots,q
%  \end{cases}\]
\[j=1,\dots,n \quad \text{and} \quad i=1,\dots,q\]
\end{lemma}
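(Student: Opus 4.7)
The proof breaks into two independent pieces: a separability reduction, followed by a single vector-variable convex minimization that I would solve by subdifferential calculus.

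First, I would show that the objective decouples completely across the $(j,i)$-indexed columns of the reshaped blocks. Since the Frobenius norm expands columnwise,
\[\|H-S\|_F^2 = \sum_{j=1}^n \|H_j - S_j\|_2^2 = \sum_{j=1}^n \|mat(H_j) - mat(S_j)\|_F^2 = \sum_{j=1}^n \sum_{i=1}^q \bigl\|(mat(H_j))_i - (mat(S_j))_i\bigr\|_2^2,\]
and by definition $\|mat(S_j)\|_{2,1} = \sum_{i=1}^q \|(mat(S_j))_i\|_2$, the full problem splits into $nq$ independent subproblems, one per pair $(j,i)$, each of the form
\[\min_{s \in \mathbb{R}^w}\ \tfrac{1}{2}\|h - s\|_2^2 + \lambda \|s\|_2, \qquad h := (mat(H_j))_i,\]
after absorbing a harmless scalar factor into $\lambda$ to match the paper's normalization. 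It therefore suffices to show that the minimizer of this scalar subproblem is the block soft-thresholded vector $s^\star = h \max(0, 1 - \lambda/\|h\|_2)$.

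Second, I would dispatch the subproblem by Fermat's rule. The objective is proper, convex, coercive, and strictly convex in $s$, so a unique minimizer exists and is characterized by
\[0 \in (s^\star - h) + \lambda \, \partial \|s^\star\|_2, \qquad \partial \|s\|_2 = \begin{cases} \{s/\|s\|_2\}, & s \ne 0, \\ \{v : \|v\|_2 \le 1\}, & s = 0. \end{cases}\]
In the case $s^\star \ne 0$, the inclusion becomes the equation $s^\star\bigl(1 + \lambda/\|s^\star\|_2\bigr) = h$, which forces $s^\star$ to be a positive multiple of $h$; taking norms on both sides yields the scalar identity $\|s^\star\|_2 + \lambda = \|h\|_2$, admissible only when $\|h\|_2 > \lambda$. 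Substituting back gives $s^\star = h\,(1 - \lambda/\|h\|_2)$. In the case $s^\star = 0$, the inclusion reduces to $h \in \lambda \cdot \{v : \|v\|_2 \le 1\}$, i.e.\ $\|h\|_2 \le \lambda$. Stitching the two regimes together produces exactly the stated $h \max(0, 1 - \lambda/\|h\|_2)$ formula, applied entrywise to each column $(mat(H_j))_i$.

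The only genuine subtlety is the non-differentiability of $\|\cdot\|_2$ at the origin, which is why one cannot simply differentiate and set equal to zero; the subdifferential of the $\ell_2$-norm cleanly handles this, and the ``colinearity plus norm identity'' trick in the $s^\star \ne 0$ branch reduces what looks like a vector equation to a single linear equation in $\|s^\star\|_2$. Beyond that, separability is immediate from the definitions of the Frobenius and $\ell_{2,1}$ norms, and uniqueness follows from strict convexity, so I expect no further technical obstacles.
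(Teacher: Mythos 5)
Your proof is correct, and it follows the same separability skeleton as the paper's: both arguments first split the objective over the frames $j$ (and, in your case, further over the columns $i$) using the columnwise expansion of $\|\cdot\|_F^2$ and the definition of $\|\cdot\|_{2,1}$. The difference is in how the resulting per-column subproblem is dispatched. The paper simply cites Lemma~1 of Peng \emph{et al.}\ (the RASL paper) to obtain the block soft-thresholding formula, whereas you prove that formula from scratch via Fermat's rule and the subdifferential of the $\ell_2$-norm, including the colinearity-plus-norm-identity argument in the $s^\star\neq 0$ branch and the $\|h\|_2\le\lambda$ condition in the $s^\star=0$ branch. Your version is therefore self-contained where the paper's is not, at the cost of a page of standard convex analysis. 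One point you handle more honestly than the paper: the stated closed form with threshold $\lambda$ is the proximal operator of $\lambda\|\cdot\|_2$ for the objective $\tfrac{1}{2}\|h-s\|_2^2+\lambda\|s\|_2$, while the lemma's objective carries $\|H-S\|_F^2$ without the factor $\tfrac{1}{2}$, so strictly speaking the threshold should be $\lambda/2$; you flag this as a rescaling of $\lambda$, whereas the paper inherits the mismatch silently from the cited lemma. Neither issue affects the substance of the result.
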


%\subsubsection*{Proof:}
\begin{proof}
The function to be minimized can be written as:
\[\min_S {\sum_{j=1}^n \|mat(H_j) - mat(S_j)\|_F^2 + \lambda \sum_{j=1}^n \|mat(S_j)\|_{2,1}}\]
\[= \min_S {\sum_{j=1}^n \left(\|mat(H_j) - mat(S_j)\|_F^2 + \lambda\|mat(S_j)\|_{2,1}\right)}\]

Applying the \textit{lemma 1} in~\cite{PengiGaneshWrightXu12} in each summand of this expression, we have the closed-from for the $i$-th column of the matrix $mat(E_j)$ given by the lemma.
\end{proof}
\textit{Lemma 1} states that the solution matrix $S^t$ for the third optimization problem has non-overlapping zero blocks in the positions corresponding to the columns of the matrices $mat(S_j)$, $j=1,\dots,n$ with $\ell_2$-norm less than $\lambda$.

\begin{algorithm}   \footnotesize \itemsep0pt \parskip0pt \parsep0pt
  \caption{Block-sparse extension}
  \label{lst:alg2}
  \begin{algorithmic}[1] \footnotesize \itemsep0pt \parskip0pt \parsep0pt
    \STATE \textbf{Input:} $A$, $k$, $\lambda$, \textit{tol}, \textit{maxIter}
    \STATE \textbf{Output:} $S$, $L$, $\tau$
    \STATE \textit{Standard initialization:} $\tau^0 = 0$, $L^0 = A$, $S^0 = 0$
    \WHILE{${\|A \circ \tau^t - L^t - S^t\|_F^2}/{\|A\|_F^2} > \textit{tol}$
    $\text{or} \quad \textit{t} < \textit{maxIter}$}{
    \begin{enumerate}[label=\arabic*)] \itemsep0pt \parskip0pt \parsep0pt
    \item Form the matrix $A \circ \tau$ calculating the parameters $\tau_i^t$ that infer the mapping that transforms the column vector $A_i$ to the $i$-th column vector of the matrix $L^{t-1} + S^{t-1}$.
%    \item{Calculate $L^t = \sum_{i=1}^k \sigma_i U_i V_i^T \quad \text{where} \linebreak \quad \operatorname*{svd}(A \circ \tau^t - S^{t-1}) = U \Sigma V^T$}.
    \item{Calculate $L^t = \sum_{i=1}^k \sigma_i U_i V_i^T \quad \text{where} \quad \operatorname*{svd}(A \circ \tau^t - S^{t-1}) = U \Sigma V^T$}.
    \item Let $H = A \circ \tau^t - L^t$
    \end{enumerate}}
    	%\begin{itemize}[label=] \itemsep0pt \parskip0pt \parsep0pt
    		\FOR{$j = 1,\dots,n$}
    			\FOR{$l = 1,\dots,q$}
    						\IF{$\|\left(mat(H_j)\right)_i\|_2 < \lambda$}
    									\STATE{$\left(mat(S_j)\right)_i := 0$}
    							   \ELSE
    							        \STATE{$\left(mat(S_j)\right)_i := 1 - \frac{\lambda \left(mat(H_j)\right)_i}{\|\left(mat(H_j)\right)_i\|_2} $}
    							   \ENDIF
    						
    				   \ENDFOR
    		\ENDFOR  
    	%\end{itemize}
     %\end{enumerate}}
    \ENDWHILE
  \end{algorithmic}
\end{algorithm}

%------------------------------------------------------------------------
%------------------------------------------------------------------------
%------------------------------------------------------------------------
%\subsection{Block sparse model} \label{subsec:BlockSparse}
%
%This algorithm includes the introduction of a strategy for the initialization of the variables based on the capacity of the statistical \textit{leverages} of the columns of the matrices $mat(S_j)$ to detect moving blocks in each frame.
%\begin{equation} \label{eq:}
%\operatorname*{arg\,min}_{rank(L) \leq k} \|A \circ \tau - L - S\|_F^2 + \lambda \sum_{j=1}^n \|mat(S_j)\|_{2,1}
%\end{equation}
%\begin{center}
%$\text{such that} \quad A \circ \tau \approx L + S$
%\end{center}

\subsection{Ghost-Removal: Statistical Leverage Scores for Nearest Estimation of the Low-Rank and the Sparse Components}
\label{subsec:GhostRemove}
The optimization problems described in Sections~\ref{subsec:GenMovGoDec} and~\ref{subsec:BlockSparse} are solved by iterative procedures that need to be initialized using starting values of the matrices $L$, $S$ and $\tau$. Algorithms~\ref{lst:alg1} and~\ref{lst:alg2} start the iterative process with a standard (na\"{i}ve) initialization of $L^0 = A$, $S^0 = 0$, and $\tau^0 = 0$. However, a better separation of the static part and moving objects of the frames is obtained when the matrices are strategically initialized. In this section, a novel strategy for the initialization in algorithms~\ref{lst:alg1} and~\ref{lst:alg2} is proposed.

The rank-$k$ matrix that is the nearest to the matrix $A$ is a low-rank matrix that gives a good first approximation for the static part of the sequence but some parts of the moving objects remain in this rank-$k$ matrix (for instance when they move slowly, remain inactive for some period of time, or obscure part of the background during the training period). With current RPCA-based optimizations these parts that are called ``ghost" (figure~\ref{fig:RPCALBDvsBS} (b) and (d)) usually persist during the iterative process and are not removed completely. In order to reduce the influence of the ghosting effect during the iterative process one needs to have a prior knowledge of distribution of outliers (which usually appear in clusters of pixels). Hence in this work it is proposed to construct a matrix $S^0$ whose columns contain only the more salient part of the matrices $mat(A_j - L_j^0)$, $j=1,\dots,n$ where $L^0$ is the rank-$k$ matrix approximation of the matrix $A$. In other words, as each matrix $mat(A_j - L_j^0)$ contains a sketch of the moving objects in the $j$-th frame, the idea is forming the initial approximations $S^0$ using the columns of the $mat(A_j - L_j^0)$ that contribute to the the non-uniformity of the structure of the matrix. Here the \textit{leverage} scores are used to measure the importance of the columns of the matrix $mat(A_j - L_j^0)$ (the \textit{leverage} scores can be regarded as a pseudo-motion saliency map). Let the $i$-th column of the matrix to be a linear combination of the orthonormal basis given by the left singular vectors of the matrix $\left(mat(A_j - L_j^0)\right)_i = \sum_{k=1}^{rank} \sigma_k U_k V_k^i$, $i=1,\dots,h$ where $U_k$ is the $k$-th left singular vector, $V_k^i$ is the $i$-th coordinate of the $k$-th right singular vector, and $rank$ is the rank of matrix $mat(A_j - L_j^0)$. As the matrices $mat(A_j - L_j^0)$ are approximations of the frames containing the moving objects, they can be considered as approximations to low-rank matrices. It implies that one can assume:
\begin{equation}
\left(mat(A_j - L_j^0)\right)_i \approx \sum_{k=1}^{p} \sigma_k U_k V_k^i \quad , \quad p \ll rank
\end{equation}
%\[\left(mat(A_j - L_j^0)\right)_i \approx \sum_{k=1}^{p} \sigma_k U_k V_k^i \quad , \quad p \ll rank\]
Note that any two columns $i_1$ and $i_2$ differ only in the terms $\sum_{k=1}^p V_k^{i_1}$ and $\sum_{k=1}^p V_k^{i_2}$. Then these terms can be used to measure the importance or contribution of each column to the matrix. The normalized statistical \textit{leverage} scores~\cite{Mahoney2009CUR} of the $i$-th column of the matrix $mat(A_j - L_j^0)$ is defined as:
\begin{equation}
\pi_i = \frac{1}{p} \sum_{k=1}^p V_k^i \quad , \quad i=1,\dots,h
\end{equation}
%\[\pi_i = \frac{1}{p} \sum_{k=1}^p V_k^i \quad , \quad i=1,\dots,h\]
where $h$ is the number of columns of each frame of the sequence. The sub-index $j$ is removed to help understanding of this expression.
\textit{Leverages} have been used historically for outlier detection in statistical regression but recently they have been used to give a column (or row) order of the amount of motion saliency in a specific part of the image. %according to their capacities of capturing the action of the matrix.
The vector $\pi_i$ is a probability vector, i.e. $\sum_{i=1}^h \pi_i = 1$. Therefore, the columns of each matrix $mat(A_j - L_j^0)$ with leverages greater than $1/h$ are the more important columns. So the columns of the initial approximation $S^0$ contain only the more important columns of the matrices $mat(A_j - L_j^0)$, $j=1,\dots,n$. Consequently, the less salient (more static) parts of the image are not included in the initialization of the sparse part, making the iterative process less prone to converge to the wrong local optimum, yielding more stable results, and increasing the segmentation accuracy.
%\[
%\begin{split}
\noindent
\[\begin{aligned}
 \left(mat(S_j^0)\right)_i =
  \begin{cases}
   \left(mat(A_j - L_j^0)\right)_i & lev((mat(A_j - L_j^0))_i) \geq \frac{1}{h} \\
   0       & \textit{otherwise}
  \end{cases}
  \end{aligned}\]
\noindent
%  \end{split}
%\]
%\[
%\begin{split}
%\begin{multline*}

%\begin{equation}
%\left(mat(S_j^0)\right)_i =\\[0pt]
%  \begin{cases}
%   \left(mat(A_j - L_j^0)\right)_i, & leverage((mat(A_j - L_j^0))_i) \geq \frac{1}{h} \\
%   0,       & \textit{otherwise}
%  \end{cases}
%\end{equation}
%%\end{multline*}  
%%  \end{split}
%%\]

%\subsection*{Initial values for the parameter vector $\tau$}
\newtheorem*{InitVals}{Initial values for the parameter vector $\tau$}
\begin{InitVals} \normalfont
To initialize the motion parameter vector, it is a feasible practice to apply the strategy proposed in~\cite{ZhouEtAl13DECOLOR} where the parameters align each frame $A_j$, $j = 1, \dots, n$ to the middle frame of the sequence ($A_{\rfrac{n}{2}}$). Then these initial values of $\tau$ are optimized in the iterative process along with the other parameters.
\end{InitVals}

%------------------------------------------------------------------------
%------------------------------------------------------------------------
%------------------------------------------------------------------------
%\subsection{General model for moving foreground objects with different dynamical behavior}
%By choosing $p \neq 1$ in the general formulation (\ref{eq:1}), we can describe a scenario where the moving objects are moving across the scene with different speeds. The pixels of a moving object act as a background for the pixels of the objects moving at relatively higher speeds.
%The algorithms described in sections~\ref{subsec:GenMovGoDec} and~\ref{subsec:BlockSparse} are applied iteratively on a matrix $A$ that is updated in each iteration using the output sparse matrix $S$. The matrix $L_{i=1}$ describes the static background and the final matrix $S$ corresponds to dynamic objects of the sequence. The selection of the appropriate value for $p$ (motion saliency levels) is done manually and is a complicated task depending on the nature of the video used. The inclusion of $p$ as a variable in the proposed framework needs further study.

\subsection{Special Case: SVD-Free Algorithm for Fast Decomposition}
\label{subsec:SVDFree}
In this section, a particular case is considered, where the background does not change throughout the sequence. It means, the background can be described by a rank-$1$ matrix. This is particularly useful for applications such as video coding, or surveillance background subtraction in indoor environments, where the background does not change for prolonged periods or a duration of time. Henceforth, the optimization problem to be solved using the approximated RPCA is:
\begin{equation}
\operatorname*{arg\,min}_{\substack{
            S,L,\tau\\
            rank(L)=1\\
            card(S) \leq \kappa}} \|A \circ \tau - L - S\|_F
\end{equation}

Note that the soft-thresholding parameter $\lambda$ is left out, and instead the cardinality (number of non-zero elements) $card(S)$ is being fixed. The cardinality $\kappa$ acts as a hard-thresholding parameter that controls the quality of the reconstruction of $A$ using the matrices $L$ and $S$.
The rank-$1$ restriction for $L$ imposed in the optimization problem yields to solutions where the columns of the matrix $L$ can be written as $L_j \leftarrow \alpha L_1$, $j=1, \dots, n$, where $L_1$ is the first column of $L$ and $\alpha$ is a scalar.% This is based on the fact that not all the columns of a rank-$1$ matrix are necessarily equal, and rather the columns are linearly dependent (by a scalar factor).
%Note that although the columns of $L$ are different in the general model, here they represent a single image.

Based on this fact, we assume a particular rank-$1$ matrix $L$ where all the column vectors are equal; i.e. $L = l\mathds{1}^T$ where $l$ is a vector of size $m$ and $\mathds{1}^T = (1, \dots, 1)$. Note that any matrix in the form $l\mathds{1}^T$ is a rank-$1$ matrix but not all rank-$1$ matrices can be written by repeating the same vector in all the columns.

The main advantage of this special rank-$1$ matrix is that we prove the vector $l$ can be calculated without computing a SVD; therefore, this algorithm converges much faster as a result, since the most expensive computation in the described algorithms is in SVD calculation step. The optimization model is as below:
\begin{equation} \label{eq:SVDFreeMin}
\operatorname*{arg\,min}_{\substack{
            S,l,\tau\\
            card(S) \leq \kappa}} \|A \circ \tau - l\mathds{1}^T - S\|_F
\end{equation}
%Observe that we are leaving out the parameter $\lambda$ and instead we are fixing the number of non-zero elements ($card(S)$). It is convenient because the cardinality acts as a tuning parameter that controls the quality of the reconstruction.
%\subsubsection*{Solving the optimization problem}
\newtheorem*{SolOpt}{Solving the optimization problem}
\begin{SolOpt} \normalfont
Applying the alternating strategy three optimization problems must be solved for $t=1,2,\dots,p$ until convergence.
\begin{equation}
\tau^t = \operatorname*{arg\,min}_\tau \|A \circ \tau - l^{t-1}\mathds{1}^T - S^{t-1}\|_F^2
\end{equation}
\begin{equation} \label{eq:13}
l^t = \operatorname*{arg\,min}_{l} \|A \circ \tau^t - l\mathds{1}^T - S^{t-1}\|_F^2
\end{equation}
\begin{equation}
S^t = \operatorname*{arg\,min}_{card(S) \leq \kappa} \|A \circ \tau^t - l^t\mathds{1}^T - S\|_F^2
\end{equation}

Algorithm~\ref{lst:alg3} describes the iterative process for the SVD-free algorithm. The first optimization problem is solved as described in section~\ref{subp1:FirstSubProb}. The matrix $S^t$ that solves the third optimization problem is the matrix with zero elements in the positions corresponding to the first $\kappa$ smallest elements of the matrix $|A \circ \tau^t - l^t\mathds{1}^T|$.
\end{SolOpt}

\newtheorem*{subp2sec3.4}{Solving the second optimization problem -- equation (\ref{eq:13})}
\begin{subp2sec3.4} \normalfont
Let $E$ be the matrix $A \circ \tau^t - S^{t-1}$; the following lemma gives a closed-form solution for the solution vector $l$ of the second optimization problem.
%------------------------------------------------------------------------
%---------------------------------Lemma 2--------------------------------
%------------------------------------------------------------------------
%\subsubsection*{Lemma 2:}
%\newtheorem{lemma}{Lemma}
\begin{lemma}
The solution $l$ of the optimization problem $\operatorname*{arg\,min}_l \|E - l\mathds{1}^T\|_F^2$ is given by:
\[
l_i = \frac{1}{n} \sum_{j=1}^n E_{ij} \quad , \quad i=1,\dots,m
\]
\end{lemma}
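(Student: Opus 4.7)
The plan is to prove this by expanding the squared Frobenius norm elementwise, observing that the objective decouples across the rows of $E$, and then solving each resulting one-dimensional quadratic minimization in closed form. The key observation is that the matrix $l\mathds{1}^T$ has constant rows (the $i$-th row is $l_i$ repeated $n$ times), so the contribution of $l_i$ to the objective depends only on the $i$-th row of $E$.

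First, I would write
\[
\|E - l\mathds{1}^T\|_F^2 \;=\; \sum_{i=1}^{m}\sum_{j=1}^{n}\left(E_{ij} - l_i\right)^2 \;=\; \sum_{i=1}^{m} f_i(l_i),
\]
where $f_i(l_i) = \sum_{j=1}^n (E_{ij} - l_i)^2$. Since the variables $l_1,\dots,l_m$ appear in disjoint summands, the global minimization over $l \in \mathbb{R}^m$ reduces to $m$ independent one-dimensional minimizations, $\min_{l_i \in \mathbb{R}} f_i(l_i)$, for each $i = 1,\dots,m$.

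Next, I would minimize each $f_i$ by differentiation. Computing $f_i'(l_i) = -2 \sum_{j=1}^n (E_{ij} - l_i) = 2 n l_i - 2\sum_{j=1}^n E_{ij}$ and setting this equal to zero yields the stated expression $l_i = \tfrac{1}{n} \sum_{j=1}^n E_{ij}$. Since $f_i''(l_i) = 2n > 0$, the function $f_i$ is strictly convex, so this critical point is the unique global minimum of $f_i$, and hence assembling the $l_i$ into a vector gives the unique minimizer of the original objective.

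I do not anticipate any real obstacle: the whole argument hinges on the row-decoupling induced by the rank-one structure $l\mathds{1}^T$, after which the problem is just $m$ copies of the elementary fact that the sample mean minimizes the sum of squared deviations. The only thing worth flagging explicitly in the write-up is strict convexity, which guarantees uniqueness and justifies calling the formula ``the'' solution rather than merely a stationary point.
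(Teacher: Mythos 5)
Your proof is correct and follows essentially the same route as the paper's: expanding the squared Frobenius norm elementwise, regrouping so the objective decouples over the rows of $E$, and setting the derivative of each one-dimensional quadratic to zero. Your explicit note on strict convexity (hence uniqueness of the minimizer) is a small but worthwhile addition that the paper leaves implicit.
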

%\subsubsection*{Proof:}
\begin{proof}
Expanding the objective function we have:
\noindent
\begin{multline*}
\begin{aligned}
\|E - l\mathds{1}^T\|_F^2 = \sum_{k=1}^n \|E_k - l\mathds{1}^T\|_F^2 = \\
\left(E_{11} - l_1\right)^2 + \dots + \left(E_{m1} - l_m\right)^2 + \dots + \left(E_{1n} - l_2\right)^2 \\
+ \dots + \left(E_{mn} - l_m\right)^2
\end{aligned}
\end{multline*}
\noindent
%\[
%\|E - l\mathds{1}^T\|_F^2 = \sum_{k=1}^n \|E_k - l\mathds{1}^T\|_F^2 \]
%\[ = \left(E_{11} - l_1\right)^2 + \dots + \left(E_{m1} - l_m\right)^2 + \dots + \left(E_{1n} - l_2\right)^2 \\
%+ \dots + \left(E_{mn} - l_m\right)^2
%\]
Grouping terms together,
\noindent
\begin{multline*}
\begin{aligned}
= \left(E_{11} - l_1\right)^2 + \dots + \left(E_{1n} - l_1\right)^2 + \dots + \left(E_{m1} - l_m\right)^2\\
+ \dots + \left(E_{mn} - l_m\right)^2 = \sum_{i=1}^m \sum_{j=1}^n \left(E_{ij} - l_i\right)^2
\end{aligned}
\end{multline*}
\noindent
%\[
%= \left(E_{11} - l_1\right)^2 + \dots + \left(E_{1n} - l_1\right)^2 + \dots + \left(E_{m1} - l_m\right)^2
%+ \dots + \left(E_{mn} - l_m\right)^2 \]
%\[ = \sum_{i=1}^m \sum_{j=1}^n \left(E_{ij} - l_i\right)^2
%\]
Setting the derivatives of each $i$-th term $\sum_{j=1}^m \left(E_{ij} - l_i\right)^2$ to zero with respect to $l_i$ yields:
\noindent
\begin{multline*}
\begin{aligned}
\sum_{j=1}^n \left( E_{ij} - l_i \right)^2 = \\
-2 \left( E_{i1} - l_i \right) - 2 \left( E_{i2} - l_i \right) - \dots - 2 \left( E_{in} - l_i \right) = 0\\
= -2 \left( E_{i1} + \dots + E_{in} \right) + 2 n l_i = 0
\end{aligned}
\end{multline*}
\noindent
%\[
%\left( \sum_{j=1}^n \left( E_{ij} - l_i \right)^2 \right) = 
%-2 \left( E_{i1} - l_i \right) - 2 \left( E_{i2} - l_i \right) - \dots - 2 \left( E_{in} - l_i \right) = 0 \]
%\[= -2 \left( E_{i1} + \dots + E_{in} \right) + 2 n l_i = 0
%\]
We have from there that:
\[l_i = \frac{1}{n} \sum_{j=1}^n E_{ij}\]
\end{proof}
In general the problems $\operatorname*{arg\,min}_l \|E - l\mathds{1}^T\|_F^2$ and $\operatorname*{arg\,min}_{rank(L)=1} \|E - L\|_F^2$ have different solutions. The optimal solution of the first problem is the vector $l = \frac{1}{n} \sum_{j=1}^n E_j$ and the solution of the second one is the matrix $L = \sigma_1 V_1 U_1^T$ where $V_1$ and $U_1$ are the largest right and left singular vectors and $\sigma_1$ is the largest singular value.
Since $l\mathds{1}^T$ is a particular rank-$1$ matrix, we know the value of the objective function of the second problem in the optimal solution is less than or equal to the minimum value of the first problem. The following lemma characterizes this difference in terms of the distance of the one-dimensional linear subspace spanned by the columns of the matrices $l\mathds{1}^T$ and $L$.

%------------------------------------------------------------------------
%---------------------------------Lemma 3--------------------------------
%------------------------------------------------------------------------
%\newtheorem{lemma}{Lemma}
\begin{lemma}
%\subsubsection*{Lemma 3:}
Suppose that $E$ is a full rank matrix. Let $l$ and $L$ be the optimal solutions of the problems $\operatorname*{arg\,min}_l \|E - l\mathds{1}^T\|_F^2$ and $\operatorname*{arg\,min}_{rank(L)=1} \|E - L\|_F^2$ respectively; then:
\[
dist \left( span \left\lbrace l \right\rbrace , span \left\lbrace L \right\rbrace \right) = O \left( \frac{\sigma_2}{\sigma_1} \right)
\]
where $span \left\lbrace l \right\rbrace$ and $span \left\lbrace L \right\rbrace$ denote the one-dimensional subspaces spanned by $l$ and $L$ respectively, and $\sigma_1$ and $\sigma_2$ are the two largest singular values of $E$.
\end{lemma}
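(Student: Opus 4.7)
The plan is to express both $l$ and the column space of $L$ in terms of the SVD of $E$, and then bound the principal angle between the two one-dimensional subspaces by separating the components of $l$ that lie along the dominant left singular direction from those that are orthogonal to it.

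First, I would write $E$ in its full SVD $E = \sum_k \sigma_k U_k V_k^T$. By Lemma 2, the minimizer of the first problem is $l = \tfrac{1}{n} E\mathds{1}$, so expanding this via the SVD gives
\begin{equation*}
l \;=\; \tfrac{1}{n}\sum_k \sigma_k\,(V_k^T\mathds{1})\,U_k \;=\; \alpha_1 U_1 + \sum_{k\ge 2}\alpha_k U_k, \qquad \alpha_k := \tfrac{\sigma_k}{n}(V_k^T\mathds{1}).
\end{equation*}
On the other hand, the solution of the second problem has column space exactly $\mathrm{span}\{U_1\}$. Thus the distance between the two one-dimensional subspaces can be captured by the sine of the principal angle, namely
\begin{equation*}
\mathrm{dist}\bigl(\mathrm{span}\{l\},\mathrm{span}\{L\}\bigr) \;=\; \frac{\bigl\|(I-U_1 U_1^T)\,l\bigr\|_2}{\|l\|_2}.
\end{equation*}

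Next I would bound numerator and denominator separately. For the numerator, the orthogonal component is $\sum_{k\ge 2}\alpha_k U_k$, whose squared norm is $\sum_{k\ge 2}\alpha_k^2$. Using $\sigma_k\le\sigma_2$ for $k\ge 2$ and the Parseval identity $\sum_{k}(V_k^T\mathds{1})^2=\|\mathds{1}\|_2^2=n$ (since the $V_k$ form an orthonormal basis), one obtains
\begin{equation*}
\sum_{k\ge 2}\alpha_k^2 \;\le\; \frac{\sigma_2^2}{n^2}\sum_{k\ge 2}(V_k^T\mathds{1})^2 \;\le\; \frac{\sigma_2^2}{n}.
\end{equation*}
For the denominator, since $\|l\|_2\ge|\alpha_1|=\tfrac{\sigma_1}{n}|V_1^T\mathds{1}|$, putting things together yields
\begin{equation*}
\mathrm{dist}\bigl(\mathrm{span}\{l\},\mathrm{span}\{L\}\bigr) \;\le\; \frac{\sigma_2/\sqrt{n}}{\sigma_1\,|V_1^T\mathds{1}|/n} \;=\; \frac{\sqrt{n}}{|V_1^T\mathds{1}|}\cdot\frac{\sigma_2}{\sigma_1},
\end{equation*}
which is $O(\sigma_2/\sigma_1)$ whenever the leading right singular vector $V_1$ has a nontrivial inner product with the constant vector $\mathds{1}$.

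The main obstacle is precisely this last point: the hidden constant inside the big-O depends on $|V_1^T\mathds{1}|$, and this quantity could be small if the dominant right singular direction happens to be nearly orthogonal to the all-ones vector. In the background-subtraction regime considered in the paper, however, the dominant mode of $E=A\circ\tau-S^{t-1}$ reflects the repeated (quasi-static) background across frames, so $V_1$ is expected to be close to $\tfrac{1}{\sqrt{n}}\mathds{1}$ and $|V_1^T\mathds{1}|=\Theta(\sqrt{n})$; this is the implicit non-degeneracy used by the ``full rank'' hypothesis together with a well-separated leading mode. I would state this explicitly as the regime under which the big-O bound is meaningful and then conclude the lemma.
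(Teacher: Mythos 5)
Your proof is correct, and it reaches the same conclusion by a noticeably cleaner route than the paper. The paper's argument goes through the spectral decomposition of $EE^{T}=W\Psi W^{T}$ and mimics the convergence analysis of the power method: it introduces an auxiliary vector $q$ with $E^{T}q=\mathds{1}$ (whose existence is where the full-rank hypothesis enters), writes $l=\tfrac{1}{n}EE^{T}q$, and reads off that the components of $l$ orthogonal to $W_{1}=U_{1}$ are damped by factors $\psi_{j}/\psi_{1}=\sigma_{j}^{2}/\sigma_{1}^{2}$ relative to the expansion coefficients $\alpha_{j}/\alpha_{1}$ of $q$; translating eigenpairs of $EE^{T}$ into singular pairs of $E$ then gives the stated $O(\sigma_{2}/\sigma_{1})$. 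You instead expand $l=\tfrac{1}{n}E\mathds{1}$ directly in the left singular basis, identify the column space of the Eckart--Young rank-one minimizer as $\mathrm{span}\{U_{1}\}$, and bound the sine of the principal angle explicitly via Parseval. The two mechanisms are the same --- in the paper's notation $\alpha_{j}\psi_{j}$ is exactly your $\sigma_{j}(V_{j}^{T}\mathds{1})$ up to the factor $1/n$ --- but your version yields the fully explicit constant $\sqrt{n}/|V_{1}^{T}\mathds{1}|$, which the paper's big-$O$ silently absorbs into the ratios $\alpha_{j}/\alpha_{1}$. Your closing remark is the genuinely valuable addition: the bound degenerates when $V_{1}$ is nearly orthogonal to $\mathds{1}$, a non-degeneracy condition that the paper's proof also requires (it needs $\alpha_{1}\neq 0$, i.e.\ $q$ not orthogonal to $W_{1}$) but never states; full rank of $E$ alone does not guarantee it, so making the assumption explicit, as you do, is an improvement rather than a gap.
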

%\subsubsection*{Proof:}
\begin{proof}
It is based on the proof of the convergence of the power method for computing the eigenvectors of a matrix in~\cite{GolubVanLoan96}. Consider the spectral decomposition of the matrix $E E^T$ which is $E E^T = W \Psi W^T$. Suppose that $q$ is a vector of size $m$ that can be written as a linear combination of the eigenvectors of $E E^T$:
\[q = \alpha_1 W_1 + \dots + \alpha_m W_m\]
We have:
\[E E^T q = E E^T \alpha_1 W_1 + \dots + E E^T \alpha_m W_m\]
\[E E^T q = \alpha_1 \psi_1 W_1 + \dots + \alpha_m \psi_m W_m\]
\begin{equation} \label{eq:I}
E E^T q = \alpha_1 \psi_1 \left( W_1 + \sum_{j=2}^m \frac{\alpha_j}{\alpha_1} \frac{\psi_j}{\psi_1} W_j \right)
\end{equation}
This expression is satisfied in particular for a vector of size $m$, $q$ such that $E^T q = \left(1,\dots,1\right)^T \equiv \mathds{1}$ and $q = \left( q_1,\dots,q_n,0,\dots,0 \right)$. The existence of this vector $q$ is supported by $E$ being a full rank matrix. Since $l = \frac{1}{n} E\mathds{1}$ and substituting in equation (\ref{eq:I}), we have:
\[
l = \frac{1}{n} \alpha_1 \psi_1 \left( W_1 + \sum_{j=2}^m \frac{\alpha_j}{\alpha_1} \frac{\psi_j}{\psi_1} W_j \right)
\]
and we can conclude:
\[
dist \left( span \left\lbrace l \right\rbrace , span \left\lbrace W_1 \right\rbrace \right) = O \left( \frac{\psi_2}{\psi_1} \right)
\]
Finally, the thesis of the lemma is obtained using the known relation between the eigenvectors/eigenvalues of $E E^T$ and the singular vectors and singular values of $E$.
\end{proof}
This lemma asserts that the distance between both subspaces and the quotient $\sigma_2 / \sigma_1$ are functions with similar growth rates. In other words, the distance between the minimum value of $\|E - l\mathds{1}^T\|_F^2$ as a function of $l$ and $\|E - L\|_F^2$ as a function of $L$, depends on the gap between the largest and second largest singular values of the matrix $E$. %The numerical experiments in synthetic and real sequences show results very similar for both problems.
It is clear that the main advantage of the proposed optimization function (\ref{eq:SVDFreeMin}) is that the calculation of the expensive SVD is avoided.
%
%
%\begin{center}
%\textbf{!!! add experiments for SVD free !!!}
%\end{center}
%

\end{subp2sec3.4}
%------------------------------------------------------------------------
%------------------------------------------------------------------------
%------------------------------------------------------------------------
\begin{algorithm}   \footnotesize \itemsep0pt \parskip0pt \parsep0pt
  \caption{SVD-free solution}
  \label{lst:alg3}
  \begin{algorithmic}[1] \itemsep0pt \parskip0pt \parsep0pt
    \STATE \textbf{Input:} $A$, $k$, $\kappa$, \textit{tol}, \textit{maxIter}
    \STATE \textbf{Output:} $S$, $l$, $\tau$
    \STATE \textit{Standard initialization:} $\tau^0 = 0$, $l = \frac{1}{n} \sum_{j=1}^n A_j$, $S^0 = 0$
    \WHILE{${\|A \circ \tau^t - l\mathds{1}^T - S^t\|_F^2}/{\|A\|_F^2} > \textit{tol} $
    $ \text{or} \quad \textit{t} < \textit{maxIter}$}
    \begin{enumerate}[label=\arabic*)] \itemsep0pt \parskip0pt \parsep0pt
    \item Form the matrix $A \circ \tau$ calculating the parameters $\tau_i^t$ that infer the mapping that transforms the column vector $A_i$ to the $i$-th column vector of the matrix $l^{t-1}\mathds{1}^T + S^{t-1}$.
    \item{Calculate $E = A \circ \tau^t - S^{t-1} \quad , \quad l^t = \frac{1}{n} \sum_{j=1}^n E_j$}
    \item{Form $S^t = \mathcal{P}_{\Omega_\kappa} \left(A \circ \tau^t - L^t \right)$ where $\Omega_\kappa$ is the non-zero subset of the first $\kappa$ largest entities of $|A \circ \tau - l^t\mathds{1}^T|$} 
     \end{enumerate}
    \ENDWHILE
    \STATEx \textit{*The initialization strategy described in section~\ref{subsec:GhostRemove} can be used in this algorithm.}
  \end{algorithmic}
\end{algorithm}

%-------------------------------------------------------------------------------------------
\section{Experiments and Results}
\label{sec:ExperimentsRes}
The proposed algorithms in section~\ref{sec:Proposal} were implemented and tested in MATLAB \textit{8.3.0.532 (R2014a)} on a 64-bit PC with Intel Core i7-4770 CPU @3.40GHz (single core) and 32GB of RAM. A C++ implementation has also been developed. 
%The toolbox consists of a general drive \textit{TestModel} and three main codes for models: \textit{Model1} SVD-free algorithm for video coding, \textit{Model2} a block-sparse algorithm, and \textit{Model3} the GoDec extension algorithm. Additionally, the general drive performs the reconstruction of the original sequence with the transformation parameters for all the models.
%Currently a single gray-level channel for all videos is demonstrated here. With color videos (RGB) only the green channel is used. 
For the evaluations, 7 challenging background subtraction datasets (\cite{CDnet2014}, \cite{BMC2012}, \cite{CM2005}, \cite{SAI2011}, \cite{i2R2004}, \cite{MuHAVi2010}, \cite{Moseg2010Malik}) are tested with our models. They vary in resolution, quality, frame number, and general scene scenarios which guarantee an unbiased and rigorous evaluation. Please refer to table~\ref{tab:datasets} for the details and the challenges present in each sequence. A complete description of challenges is available in~\cite{BouwmansBook2014}.

\begin{table*}[!t] \footnotesize %\tiny
\setlength\tabcolsep{2.5pt}
\centering

\caption{Information of the sequences used in experiments.}

\begin{tabular}{@{}c@{}@{}c@{}lp{11cm}}
\hline
Dataset & Sequence Name & size $\times$ {\#}frames & Description of challenges \\ \hline
\multicolumn{1}{l}{\multirow{1}{50pt}{{\tiny Change Detection Workshop 2014}~\cite{CDnet2014}} }                                                                &
\multicolumn{1}{l}{Highway}       & $[$240 $\times$ 320$]$ $\times$ 1700 & Dynamic background, crowded scene, camouflage, shadows, foreground aperture  \\ %\cline{2-10}
\multicolumn{1}{c}{}                                                                                     &
\multicolumn{1}{l}{Office}        & $[$240 $\times$ 360$]$ $\times$ 2050 & Sleeping foreground, foreground aperture  \\ %\cline{2-10}
\multicolumn{1}{c}{}                                                                                     &
\multicolumn{1}{l}{Pedestrian}    & $[$240 $\times$ 360$]$ $\times$ 1099 & Shadows, foreground aperture \\ %\cline{2-10}
\multicolumn{1}{c}{}                                                                                     &
\multicolumn{1}{l}{PETS2006}      & $[$576 $\times$ 720$]$ $\times$ 1200 & Shadows, inserted background object, sleeping foreground object \\ %\cline{1-10}\hline\hline
[0.04cm]
\multicolumn{1}{l}{\multirow{1}{50pt}{{\tiny Background Models Challenge 2012}~\cite{BMC2012}} }                                                                &
\multicolumn{1}{l}{Video001}      & $[$240 $\times$ 320$]$ $\times$ 63 & Noise, illumination changes, bootstrapping, dynamic background, shadows \\ %\cline{2-10}
\multicolumn{1}{c}{}                                                                                     &
\multicolumn{1}{l}{Video002}      & $[$288 $\times$ 352$]$ $\times$ 73 & Noise, inserted background object, large foreground objects \\ %\cline{2-10}
\multicolumn{1}{c}{}                                                                                     &
\multicolumn{1}{l}{Video003}      & $[$240 $\times$ 320$]$ $\times$ 15 & Dynamic background, beginning moving object \\ %\cline{2-10}
\multicolumn{1}{c}{}                                                                                     &
\multicolumn{1}{l}{Video004}      & $[$240 $\times$ 320$]$ $\times$ 59 & Illumination changes, dynamic background, inserted background object, shadows \\ %\cline{2-10}
\multicolumn{1}{c}{}                                                                                     &
\multicolumn{1}{l}{Video005}      & $[$240 $\times$ 320$]$ $\times$ 79 & Moved background object, inserted background object, dynamic background, bootstrapping \\ %\cline{2-10}
\multicolumn{1}{c}{}                        &
\multicolumn{1}{l}{Video006}      & $[$240 $\times$ 320$]$ $\times$ 84 & Noise, illumination changes, camouflage \\ %\cline{2-10}
\multicolumn{1}{c}{}                        &
\multicolumn{1}{l}{Video007}      & $[$240 $\times$ 320$]$ $\times$ 109 & Illumination changes, noise \\ %\cline{2-10}
\multicolumn{1}{c}{}                        &
\multicolumn{1}{l}{Video008}      & $[$240 $\times$ 320$]$ $\times$ 54 & Camera jitter, camera automatic adjustments, dynamic background, illumination changes, shadows, noise \\ %\cline{2-10}
\multicolumn{1}{c}{}                        &
\multicolumn{1}{l}{Video009}      & $[$288 $\times$ 352$]$ $\times$ 49 & Illumination changes, shadows \\ %\cline{1-10}\hline\hline
[0.04cm]
\multicolumn{1}{l}{\multirow{1}{50pt}{{\tiny Carnegie Mellon 2005}~\cite{CM2005}} }                                                                 &
\multicolumn{1}{l}{CM Data}       & $[$240 $\times$ 360$]$ $\times$ 500 & Camera jitter, dynamic background, camouflage \\ %\cline{1-10}\hline\hline
[0.25cm]
\multicolumn{1}{l}{\multirow{1}{60pt}{{\tiny Stuttgart 2011}~\cite{SAI2011}} }                                                                &
\multicolumn{1}{l}{Camouflage}    & $[$600 $\times$ 800$]$ $\times$ 600 & Camouflage, shadows, dynamic background \\ %\cline{1-10}\hline\hline
[0.04cm]
\multicolumn{1}{l}{\multirow{1}{50pt}{{\tiny Perception 2004 (i2R)}~\cite{i2R2004}} }                                                                &
\multicolumn{1}{l}{Hall}          & $[$144 $\times$ 176$]$ $\times$ 3584 & Sleeping foreground object, shadows, inserted background object \\ %\cline{2-10}
\multicolumn{1}{c}{}                                                                                     & 
\multicolumn{1}{l}{Bootstrap}     & $[$120 $\times$ 160$]$ $\times$ 3055 & Bootstrapping, shadows, camouflage, foreground aperture \\ %\cline{2-10}
\multicolumn{1}{c}{}                                                                                     &
\multicolumn{1}{l}{Curtain}       & $[$218 $\times$ 180$]$ $\times$ 2964 & Dynamic background, camouflage, illumination changes \\ %\cline{2-10}
\multicolumn{1}{c}{}                                                                                      &
\multicolumn{1}{l}{Escalator}     & $[$130 $\times$ 160$]$ $\times$ 3417 & Dynamic background, noise, illumination changes \\ %\cline{2-10}
\multicolumn{1}{c}{}                                                                                     &
\multicolumn{1}{l}{Fountain}      & $[$128 $\times$ 160$]$ $\times$ 523 & Dynamic background, camouflage \\ %\cline{2-10}
\multicolumn{1}{c}{}                                                                                     &
\multicolumn{1}{l}{Shopping Mall} & $[$256 $\times$ 320$]$ $\times$ 1286 & Sleeping foreground object, shadows, inserted background object \\ %\cline{2-10}
\multicolumn{1}{c}{}                                                                                     &
\multicolumn{1}{l}{Water Surface} & $[$128 $\times$ 160$]$ $\times$ 633 & Dynamic background, foreground aperture, camouflage \\ %\cline{1-10}
%[0.08cm]
[0.04cm]
\multicolumn{1}{l}{\multirow{1}{60pt}{{\tiny MuHAVi-MAS}~\cite{MuHAVi2010}} }                                                                 &
\multicolumn{1}{l}{Walk Turn Back} & $[$576 $\times$ 720$]$ $\times$ 466 & Camera jitter, camouflage, moved background object \\ %\cline{1-10}\hline\hline

\hline
\end{tabular}
\label{tab:datasets}
\end{table*}

\subsection{Evaluating the Ghost-Removal Algorithm}
Figure~\ref{fig:RPCALBDvsBS} (b)-(e) shows a comparison between the background and foreground parts calculated with the model introduced in section~\ref{subsec:GenMovGoDec} with and without the ghost-removal algorithm introduced in section~\ref{subsec:GhostRemove}. Notice the absorption of some foreground parts in the background when the initialization algorithm is not used in columns (b) and (d). This in turn corrupts the calculated foreground as well. The ghost-removal method significantly reduces the unwanted effect and noise in the results. The results from our algorithm shown in this figure are without any further refinement or morphological operations and solely the raw output of the algorithms. The parameters used to obtain these results are $rank(L) = 1$, $\lambda = 0.1$, and $maxIterations = 10$. Table~\ref{tab:GRTime} shows computing time for different sequences with and without the ghost removal initialization. Note that the initialization process is computationally cheap and the algorithm keeps the same order in time consumption.

%------------------------------------------------------------------------
%------------------------------fig1 Ghost PNG----------------------------
%------------------------------------------------------------------------
%\begin{figure*}
%\begin{center}
%\fbox{\includegraphics[width=0.9\linewidth]{fig1_InitializationEffect_Ghost}}
%\end{center}
%   \caption{Effect of initialization, comparison for a complex video sequence with dynamic background with water rippling. (a) Original video for frames 1, 24, and 48. (b) Corresponding background extracted without initialization. (c) Corresponding background with our initialization method. (d) Corresponding foreground without initialization. (e) Corresponding foreground with our initialization method.}
%\label{fig:GhostWaterSurface}
%\end{figure*}
%------------------------------------------------------------------------
%------------------------------------------------------------------------
%------------------------------------------------------------------------
\begin{table}[!t]\footnotesize
%\begin{center}
\centering

\caption{Time consumption for model in~\ref{subsec:GenMovGoDec} with and without ghost removal algorithm in ~\ref{subsec:GhostRemove}. CPU times are in seconds and for 15 iterations.}

\begin{tabular}{llrr} %b{2cm} or p
\hline
Sequence & size$\times${\#}frames & Original & GR\\
\hline
Water Surface~\cite{i2R2004} & $[$128 $\times$ 160$]$ $\times$ 48  & 2.88  & 2.98\\
%\hline
Pedestrian~\cite{CDnet2014}  & $[$158 $\times$ 238$]$ $\times$ 24  & 2.77  & 2.83\\
%\hline
QMUL Junction                & $[$288 $\times$ 360$]$ $\times$ 300 & 73.65 & 73.74\\
%\parbox[c]{2.3cm}{QMUL Junction\\Training clip 43}& 288$\times$360$\times$300 & 73.65 & 73.74\\
\hline
\end{tabular}
%\end{center}
%\caption{{\footnotesize Time consumption for model in~\ref{subsec:GenMovGoDec} with and without ghost removal algorithm in ~\ref{subsec:GhostRemove}. Times are in seconds and for 15 iterations.}}
\label{tab:GRTime}
\end{table}

\subsection{Comparison of the RPCA-LBD and our Proposed Block-Sparse Algorithm}
In this experiment the performance of a model named RPCA-LBD~\cite{TangNehorai11BlockSparse} that involves a $\ell_{2,1}$-norm minimization, and the proposed block-sparse model in this paper is evaluated and compared. Figure~\ref{fig:RPCALBDvsBS} shows the results for both algorithms. Columns (b) and (d) correspond to the low-rank and sparse parts obtained by the RPCA-LBD for $\lambda=0.6710$ (as tuned by authors in their original paper). Notice the ghosting effect present in both $L$ and $S$ along with unwanted noise and parts from the dynamic background absorbed into the foreground. Columns (c) and (e) show the results for the same frames obtained by our block-sparse solution with the ghost-removal initialization intact. Notice the ghosting effect which is eliminated and less of the unwanted noise and dynamic background has leaked into the foreground. Finally the columns (f) and (g) show the foreground masks obtained from the sparse parts of both algorithms. The foreground mask for RPCA-LBD in column (f) has been refined with a thresholding strategy explained in section~\ref{subsec:FGsegEval}, and the foreground mask of our method in column (g) is unrefined. The parameters used to obtain these results are $rank(L) = 1$, $\lambda = 0.03$, and $maxIterations = 10$.

%------------------------------------------------------------------------
%----------------------------Fig2 - BScompRPCALBD PNG--------------------
%------------------------------------------------------------------------
%\begin{figure*}
%\begin{center}
%\fbox{\includegraphics[width=0.9\linewidth]{fig2_BScompRPCALBD}}
%\end{center}
%\caption{RPCA-LBD algorithm vs. our block-sparse model, comparison for a complex video sequence with dynamic background with water rippling. (a) Original video for frames 1, 24, and 48. (b) Corresponding background extracted by RPCA-LBD. (c) Corresponding background with our block-sparse method. (d) Corresponding foreground by RPCA-LBD. (e) Corresponding foreground with our block-sparse method.}
%\label{fig:RPCALBDvsBS}
%\end{figure*}

\subsection{Decomposition and Reconstruction of Video Sequences with SVD-Free Algorithm}
Figure~\ref{fig:SVDfreeReconstComp} shows the visual results for video reconstruction for the proposed SVD-free algorithm which can handle moving cameras. In this example only the vector $l$ (corresponding to background), a cardinality of 15\% of the pixels of the matrix $S$ (corresponding to moving objects in foreground), and the parameter vector $\tau$ were used for reconstructing the original video sequence. Notice that with a small amount of information the sequence is well constructed in the output on column (b). The evaluation of compression ratio gained with this method is out of scope of this work and must be studied further. This sequence has been captured with a moving camera, and therefore the frames are accordingly aligned during the iterative process. Table~\ref{tab:CPUTIME} lists the CPU time for computation of a foreground from the sequence that obtains the maximal performance (in terms of accuracy of foreground detection) for all the videos in six datasets. Notice targeting the same performance, the speed-up in computation time using our SVD-free algorithm.

\subsection{Foreground Segmentation Evaluation}
\label{subsec:FGsegEval}
For quantitative evaluation, the accuracy of foreground detection is measured by comparing the calculated foreground support $F \in {\lbrace0,1\rbrace}^{m,n}$ with the binary ground truth images.
\begin{equation}
F_{ij} =
  \begin{cases}
   0, & \text{if} \quad S_{ij} \quad \text{is background} \\
   1, & \text{if} \quad S_{ij} \quad \text{is foreground}
  \end{cases}
\end{equation}
%We use $\mathcal{P}_S(A)$ to represent the orthogonal projection of the matrix $A$ onto the linear space of matrices supported by $S$ as in~\cite{ZhouEtAl13DECOLOR}:
%\begin{equation}
%\mathcal{P}_S(A)(i,j) =
%  \begin{cases}
%   0, & \text{if} \quad F_{ij} = 0 \\
%   A_{ij}, & \text{if} \quad F_{ij} = 1
%  \end{cases}
%\end{equation}

This performance measure is regarded as a classification problem and evaluate the results using Precision and Recall values which are defined as:
\[ Precision = \frac{tp}{tp + fp} \]

\[ Recall = \frac{tp}{tp + fn} = \frac{\text{\#correctly classified foreground pixels}}{\text{\#foreground pixels in GT}}\]

The Precision and Recall values are calculated with the number of pixels that are assigned True-Positive $tp$, True-Negative $tn$, False-Positive $fp$, and False-Negative $fn$. Precision and Recall are widely used when the class distribution is skewed~\cite{PrecRecall2006}. In addition a single measurement named $F_{1}$ score is provided, which is the harmonic mean of the Precision and Recall values as:

\[F_{1} = 2 \times \frac{Precision \times Recall}{Precision + Recall}\]

The higher the $F_{1}$ score, the better the detection accuracy. Also the False-Positive Rate is defined as $FPR = \frac{fp}{fp + tn}$.

Figure~\ref{fig:SegCompJunc} shows the unrefined segmentation results for a general surveillance sequence. Notice the accuracy and coherence of the segmentation in our results as compared to that of GoDec. The proposed algorithm can handle objects that occupy large portions of the frame as well as small objects (such as pedestrians in this scene) equally well simultaneously. Table~\ref{tab:FMeasureCompTable} lists the accuracy measures for foreground segmentation in six datasets for our method, RPCA solved via inexact Augmented Lagrange Multipliers\footnote{\url{http://perception.csl.illinois.edu/matrix-rank/sample_code.html}}~\cite{lin2010augmented}, and GoDec\footnote{\url{https://sites.google.com/site/godecomposition/code}}~\cite{ZhouTao11GoDec}. The results for our method are obtained by comparing the obtained foreground (without any further refinement) to the ground truth. For RPCA-PCP method the unrefined support of the sparse matrix would produce a lot of false positives. Therefore, to obtain a more fair comparison for the
RPCA-PCP method, a threshold criterion is required to get the final foreground mask, and the same threshold strategy as in~\cite{gao2012block} is adopted. It is assumed that the distribution of the difference values between the $A$ and $L$ at the tentatively identified background locations, can be an estimation of the expected level of noise; i.e. they satisfy the Gaussian distribution $(\mu, \sigma)$. Thereafter, the threshold is set at the mean of those difference values plus three standard deviations of difference values, and is then applied to $S$ to obtain the foreground support. This is known as the $99.7$ or alternatively, the three-$\sigma$ rule of thumb in statistics, which states that even for non-normally distributed variables, at least 98$\%$ of cases should fall within properly-calculated three-$\sigma$ intervals. Given an observation of difference values $\delta$ the probability distribution can be expressed as:
\[Pr(\mu - 3\sigma \leq \delta \leq \mu + 3\sigma) \approx 0.9973\]

The Receiver Operating Characteristic (ROC) curves obtained with precision-recall values in figure~\ref{fig:Precision-Recall} show the performance of our method against GoDec for varying thresholds. ROC curves are a good tool to graphically illustrate the performance of a binary classifier system as its discrimination threshold is varied. The results here guarantee superior performance for all datasets in foreground segmentation accuracy.
\begin{table*}[!t] \footnotesize
\setlength\tabcolsep{2.85pt}

%\begin{center}
\centering

\caption{$F_{1}$ scores comparison for foreground detection accuracy. Results from our method are obtained with unrefined raw outputs. RPCA via IALM and GoDec results are refined with a thresholding strategy for a fairer comparison. (Best: bold-face, Second best: underlined). %Results are obtained by comparing the obtained foreground masks without further refinement to ground truth for six different datasets. The Precision and Recall values are calculated with the number of pixels that are assigned True-Positive $tp$, True-Negative $tn$, False-Positive $fp$, and False-Negative $fn$; $Precision = \frac{tp}{tp + fp}$ and $Recall = \frac{tp}{tp + fn} = \frac{\#\text{correctly classified foreground pixels}}{\#text{foreground pixels in GT}}$.The $F_{1}$ scores are the harmonic mean of the Precision and Recall values as $F_{1} = 2 \times \frac{Precision \times Recall}{Precision + Recall}$. Also $FPR = \frac{fp}{fp + tn}$.
}

\begin{tabular}{@{}c@{}@{}c@{}llllllllllll}
%\begin{tabular}{p{1cm}@{}c@{}llllllll}
%\begin{tabular}{ccllllllll}
%\cline{3-10}
\hline
& & \multicolumn{3}{c}{Recall (TPR)} & \multicolumn{3}{c}{Fallout (FPR)} & \multicolumn{3}{c}{Precision (PPV)} & \multicolumn{3}{c}{$F_1$ Score} \\ \cline{3-14}
& & {\tiny {RPCA}} & {\tiny {Ours}} & {\tiny {GoDec}} \hspace{2pt} & {\tiny {RPCA}} & {\tiny {Ours}} & {\tiny {GoDec}} \hspace{2pt} & {\tiny {RPCA}} & {\tiny {Ours}} & {\tiny {GoDec}} \hspace{2pt} & {\tiny {RPCA}} & {\tiny {Ours}} & {\tiny {GoDec}} \\ \hline
\multicolumn{1}{l}{\multirow{1}{80pt}{{\tiny Change Detection Workshop 2014}~\cite{CDnet2014}} }                                                                &
\multicolumn{1}{l}{Highway}       & 0.9136 & 0.7690 & 0.1724 \hspace{2pt} & 0.3438 & 0.0088 & 0.0000 \hspace{2pt} & 0.2026 & 0.8929 & 0.9975 \hspace{2pt} & 0.3316 & \textbf{0.8264} & 0.2940 \\ %\cline{2-10}
\multicolumn{1}{c}{}                                                                                     &
\multicolumn{1}{l}{Office}        & 0.7405 & 0.8631 & 0.0796 \hspace{2pt} & 0.0637 & 0.0359 & 0.0030 \hspace{2pt} & 0.5325 & 0.7020 & 0.7254 \hspace{2pt} & 0.6195 & 0.7743 & 0.1435 \\ %\cline{2-10}
\multicolumn{1}{c}{}                                                                                     &
\multicolumn{1}{l}{Pedestrian}   & 0.9489 & 0.7123 & 0.7550 \hspace{2pt} & 0.5067 & 0.0015 & 0.0044 \hspace{2pt} & 0.0415 & 0.9186 & 0.7977 \hspace{2pt} & 0.0796 & \underline{0.8024} & 0.7758 \\ %\cline{2-10}
\multicolumn{1}{c}{}                                                                                     &
\multicolumn{1}{l}{PETS2006}     & 0.9892 & 0.7197 & 0.1579 \hspace{2pt} & 0.8416 & 0.0028 & 0.0000 \hspace{2pt} & 0.0218 & 0.8278 & 0.9996 \hspace{2pt} & 0.0426 & 0.7700 & 0.2728 \\ %\cline{1-10}\hline\hline
[0.08cm]
\multicolumn{1}{l}{\multirow{1}{80pt}{{\tiny Background Models Challenge 2012}~\cite{BMC2012}} }                                                                &
\multicolumn{1}{l}{Video001}      & 0.8426 & 0.4021 & 0.9182 \hspace{2pt} & 0.2112 & 0.0141 & 0.3647 \hspace{2pt} & 0.0506 & 0.2763 & 0.0325 \hspace{2pt} & 0.0955 & 0.3275 & 0.0629 
\\ %\cline{2-10}
\multicolumn{1}{c}{}                                                                                     &
\multicolumn{1}{l}{Video002}      & 0.8224 & 0.6526 & 0.8790 \hspace{2pt} & 0.3380 & 0.0246 & 0.1917 \hspace{2pt} & 0.1648 & 0.6825 & 0.2710 \hspace{2pt} & 0.2746 & 0.6672 & 0.4143 
\\ %\cline{2-10}
\multicolumn{1}{c}{}                                                                                     &
\multicolumn{1}{l}{Video003}      & 0.9710 & 0.7816 & 0.9308 \hspace{2pt} & 0.1174	& 0.0023 & 0.4575 \hspace{2pt} & 0.1517 & 0.8805	& 0.0421 \hspace{2pt} & 0.2625 & \textbf{0.8281} & 0.0806 
\\ %\cline{2-10}
\multicolumn{1}{c}{}                                                                                     &
\multicolumn{1}{l}{Video004}      & 0.7305 & 0.6870 & 0.9397 \hspace{2pt} & 0.5969 & 0.0028 & 0.3905 \hspace{2pt} & 0.0135 & 0.7367 & 0.0263 \hspace{2pt} & 0.0266 & \underline{0.7110} & 0.0511 
\\ %\cline{2-10}
\multicolumn{1}{c}{}                                                                                     &
\multicolumn{1}{l}{Video005}      & 0.9347 & 0.3995 & 0.9627 \hspace{2pt} & 0.4031 & 0.0141 & 0.2921 \hspace{2pt} & 0.0156 & 0.1622 & 0.0220 \hspace{2pt} & 0.0307 & 0.2307 & 0.0430
\\ %\cline{2-10}
\multicolumn{1}{c}{}                        &
\multicolumn{1}{l}{Video006}      & 0.7613 & 0.6118 & 0.9016 \hspace{2pt} & 0.1135	& 0.0206 & 0.2543 \hspace{2pt} & 0.2102 & 0.5411 & 0.1233 \hspace{2pt} & 0.3294 & 	0.5743 & 0.2169 
\\ %\cline{2-10}
\multicolumn{1}{c}{}                        &
\multicolumn{1}{l}{Video007}      & 0.3010 & 0.5410 & 0.8170 \hspace{2pt} & 0.3027 & 0.0111 & 0.1735 \hspace{2pt} & 0.0642 & 0.7704 & 0.2452 \hspace{2pt} & 0.1058 & 	0.6356 & 0.3772 
\\ %\cline{2-10}
\multicolumn{1}{c}{}                        &
\multicolumn{1}{l}{Video008}      & 0.8030 & 0.3650 & 0.8946 \hspace{2pt} & 0.3179 & 0.0067 & 0.4234 \hspace{2pt} & 0.0549 & 0.5575 & 0.0464 \hspace{2pt} & 0.1028 & 	0.4412 & 0.0882
 \\ %\cline{2-10}
\multicolumn{1}{c}{}                        &
\multicolumn{1}{l}{Video009}      & 0.9157 & 0.6052 & 0.9193 \hspace{2pt} & 0.8348 & 0.0004 & 0.3600 \hspace{2pt} & 0.0046 & 0.8485 & 0.0106 \hspace{2pt} & 0.0091 & 	0.7065 & 0.0210
 \\ %\cline{1-10}\hline\hline
[0.08cm]
\multicolumn{1}{l}{\multirow{1}{80pt}{{\tiny Carnegie Mellon 2005}~\cite{CM2005}} }                                                                 &
\multicolumn{1}{l}{CM Data}       & 0.7861 & 0.7271 &	0.7232 \hspace{2pt} & 0.3076 & 0.0050 & 0.0281 \hspace{2pt} & 0.0485 & 0.7428 & 0.3394 \hspace{2pt} & 0.0913 &	\textbf{0.7349} & \underline{0.4620}
 \\ %\cline{1-10}\hline\hline
[0.08cm]
\multicolumn{1}{l}{\multirow{1}{80pt}{{\tiny Stuttgart 2011}~\cite{SAI2011}} }                                                                &
\multicolumn{1}{l}{Camouflage}    & 0.8653 & 0.6123 & 0.1310 \hspace{2pt} & 0.0696 & 0.0204 & 0.0026 \hspace{2pt} & 0.2621 & 0.4620 & 0.5856 \hspace{2pt} & \underline{0.4023} & \textbf{0.5266} & 0.2141
 \\ %\cline{1-10}\hline\hline
[0.08cm]
\multicolumn{1}{l}{\multirow{1}{80pt}{{\tiny Perception 2004 (i2R)}~\cite{i2R2004}} }                                                                &
\multicolumn{1}{l}{Hall}          & 0.9036 & 0.6516 &	0.3656 \hspace{2pt} & 0.2042	& 0.0160 & 0.0009 \hspace{2pt} & 0.2315 & 0.7347 & 0.9667 \hspace{2pt} & 0.3686 &	0.6907 & 0.5305
 \\ %\cline{2-10}
\multicolumn{1}{c}{}                                                                                     & 
\multicolumn{1}{l}{Bootstrap}     & 0.5501 & 0.6386 &	0.3142 \hspace{2pt} & 0.0949	& 0.0257 & 0.0021 \hspace{2pt} &	0.3562 & 0.7035	& 0.9349 \hspace{2pt} & 0.4324 &	0.6695 & 0.4703
 \\ %\cline{2-10}
\multicolumn{1}{c}{}                                                                                     &
\multicolumn{1}{l}{Curtain}       & 0.9736 & 0.8353 &	0.8634 \hspace{2pt} & 0.7900	& 0.0139 & 0.0159 \hspace{2pt} &	0.1081 & 0.8555	& 0.8422 \hspace{2pt} & 0.1946 &	0.8452 & \underline{0.8526}
 \\ %\cline{2-10}
\multicolumn{1}{c}{}                                                                                      &
\multicolumn{1}{l}{Escalator}     & 0.7506 & 0.5238 &	0.4873 \hspace{2pt} & 0.1234	& 0.0108 & 0.0080 \hspace{2pt} &	0.2170 & 0.6883	& 0.7354 \hspace{2pt} & 0.3367 &	0.5949 & 0.5862
 \\ %\cline{2-10}
\multicolumn{1}{c}{}                                                                                     &
\multicolumn{1}{l}{Fountain}      & 0.9578 & 0.5803 &	0.8706 \hspace{2pt} & 0.4864	& 0.0071 & 0.1647 \hspace{2pt} &	0.0654 & 0.7439	& 0.1581 \hspace{2pt} & 0.1224 &	0.6520 & 0.2675
 \\ %\cline{2-10}
\multicolumn{1}{c}{}                                                                                     &
\multicolumn{1}{l}{Shopping Mall} & 0.9773 & 0.6973 &	0.2287 \hspace{2pt} & 0.7038	& 0.0167 & 0.0010 \hspace{2pt} &	0.0831 & 0.7318	& 0.9343 \hspace{2pt} & 0.1531 &	0.7142 & 0.3675
 \\ %\cline{2-10}
\multicolumn{1}{c}{}                                                                                     &
\multicolumn{1}{l}{Water Surface} & 0.9880 & 0.7967 &	0.9468 \hspace{2pt} & 0.6781	& 0.0026 & 0.1249 \hspace{2pt} &	0.1125 & 0.9644	& 0.3973 \hspace{2pt} & 0.2020 &	\textbf{0.8726} & 0.5598
 \\ %\cline{1-10}
%[0.08cm]
[0.08cm]
\multicolumn{1}{l}{\multirow{1}{80pt}{{\tiny MuHAVi-MAS}~\cite{MuHAVi2010}} }                                                                 &
\multicolumn{1}{l}{Walk Turn Back} & 0.9655 & 0.8617 & 0.9203 \hspace{2pt} & 0.1852 & 0.0030 & 0.0277 \hspace{2pt} & 0.0681 & 0.8007 & 0.3177 \hspace{2pt} & 0.1272 & \textbf{0.8301} & \underline{0.4724} \\ %\cline{1-10}\hline\hline
\hline
\multicolumn{2}{c}{\multirow{1}{*}{Average} }                                                                &
 0.8431 & 0.6537 & 0.6600 \hspace{2pt} &	0.3754 & 0.0116 & 0.1431 \hspace{2pt} & 0.1340 &	0.7054 & 0.4587 \hspace{2pt} & 0.2061 & \textbf{0.6707} &	\underline{0.3315}
 \\
\hline
\end{tabular}
%\end{center}
%\caption{{\footnotesize $F_{1}$ scores comparison for foreground detection accuracy with unrefined raw outputs. %Results are obtained by comparing the obtained foreground masks without further refinement to ground truth for six different datasets. The Precision and Recall values are calculated with the number of pixels that are assigned True-Positive $tp$, True-Negative $tn$, False-Positive $fp$, and False-Negative $fn$; $Precision = \frac{tp}{tp + fp}$ and $Recall = \frac{tp}{tp + fn} = \frac{\#\text{correctly classified foreground pixels}}{\#text{foreground pixels in GT}}$.The $F_{1}$ scores are the harmonic mean of the Precision and Recall values as $F_{1} = 2 \times \frac{Precision \times Recall}{Precision + Recall}$. Also $FPR = \frac{fp}{fp + tn}$.
%}}
\label{tab:FMeasureCompTable}
\end{table*}
%------------------------------------------------------------------------
%-------------------------------Table CPU-TIME---------------------------
%------------------------------------------------------------------------
\begin{table*}[!t] \footnotesize
%\begin{center}
\centering

\caption{Time performance comparison of the RPCA via IALM, the approximated RPCA GoDec, and our SVD-Free approximated RPCA for processing the whole dataset videos. All algorithms were run with 5 iterations and the parameter $\lambda$ was chosen to obtain maximal $F_1$ measure performance.}

\begin{tabular}{lllllll}
%\begin{tabular}{p{1cm}@{}c@{}llllllll}
%\begin{tabular}{ccllllllll}
%\cline{3-10}
\hline
& {CDW~\cite{CDnet2014}} & {BMC~\cite{BMC2012}} & {CM~\cite{CM2005}} & {SAI~\cite{SAI2011}} & {i2R~\cite{i2R2004}} & {MuHAVi-MAS~\cite{MuHAVi2010}}\\ \cline{2-7}  
%& {4}   & {9}   & {1}  & {1}   & {7}   & {1} \\ \hline             
Number of Frames & {6049} & {591} & {500} & {600} & {15462} & {466} \\ \hline
RPCA (CPU sec.) & 1931.12 & 116.94 & 65.67 & 454.59 & 646.52 & 380.85 \\
[0.08cm]           
GoDec (CPU sec.) & 1874.82  & 49.67 & 44.26 & 376.91 & 480.33 & 203.17 \\
[0.08cm]
Our Model (CPU sec.)       & 358.91   & 20.82 & 17.38 & 117.69 & 209.84 & 70.27 \\
%[0.08cm]
%Speed up ratio (\%)        & 522.37\% & 238.55\% & 254.62\% & 320.25\% & 228.90\% & 289.13\% \\
\hline
\end{tabular}
%\end{center}
%\caption{{\footnotesize Time performance of the approximated RPCA GoDec, versus our SVD-Free approximated RPCA for processing the whole dataset videos. All algorithms were run with 5 iterations and the parameter $\lambda$ was chosen to obtain maximal $F_1$ measure performance.}}
\label{tab:CPUTIME}
\end{table*}
%------------------------------------------------------------------------
%----------------------Figure 1 GhostRemove -----------------------------
%------------------------------------------------------------------------
%\begin{figure}[t]
%\begin{center}
%\includegraphics[trim =1.8cm 3.3cm 1.8cm 8cm, clip = true, width=0.46\textwidth]{fig_pdf/fig1_InitializationEffect_Ghost.pdf}
%\end{center}
%    \caption{{\footnotesize Effect of initialization, comparison for a complex video sequence with dynamic background with water rippling. (a) Original video for frames 24 and 48. (b) Corresponding background extracted without initialization. (c) Corresponding background with our initialization method. (d) Corresponding foreground without initialization. (e) Corresponding foreground with our initialization method.}}
%\label{fig:GhostWaterSurface}
%\label{fig:onecol}
%\end{figure}
%------------------------------------------------------------------------
%------------------------------------------------------------------------
%------------------------------------------------------------------------
\begin{figure*}[!t] %0.46\textwidth %0.92
%\begin{center}
\centering
\includegraphics[trim =1.4cm 1.8cm 1.4cm 1.2cm, clip = true,width=0.98\textwidth]{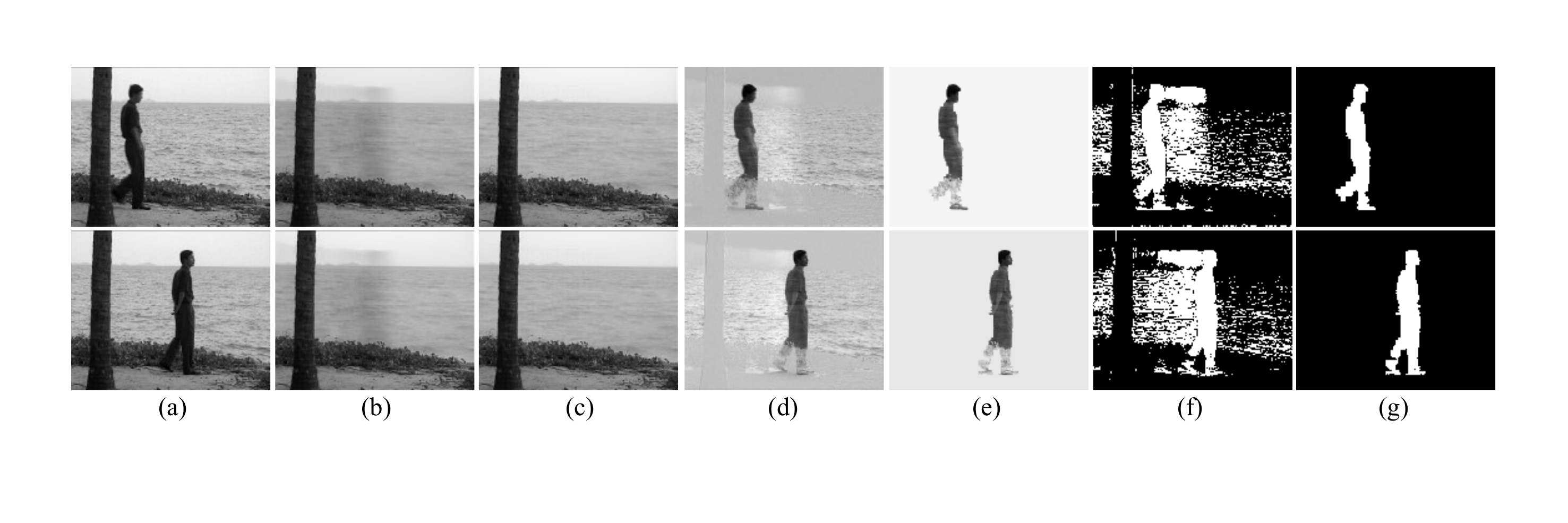}
%\end{center}
    \caption{Effect of initialization (b)-(e) and RPCA-LBD algorithm vs. our block-sparse model (d)-(g), comparison for a complex video sequence with dynamic background (water rippling) and camouflage. (a) Original video for frames 24 and 48. (b) Corresponding background extracted by RPCA-LBD. (c) Background with our block-sparse method. (d) Foreground by RPCA-LBD. (e) Foreground with our block-sparse method. (f) Refined foreground mask obtained by RPCA-LBD. (g) Unrefined foreground mask obtained by our block-sparse method.}
\label{fig:RPCALBDvsBS}
%\label{fig:onecol}
\end{figure*}
%------------------------------------------------------------------------
%------------------------------------------------------------------------
%------------------------------------------------------------------------
\begin{figure*}[!t] %0.46\textwidth %0.9
%\begin{center}
\centering
\includegraphics[trim =2cm 2.4cm 2cm 2.6cm, clip = true,width=0.80\textwidth]{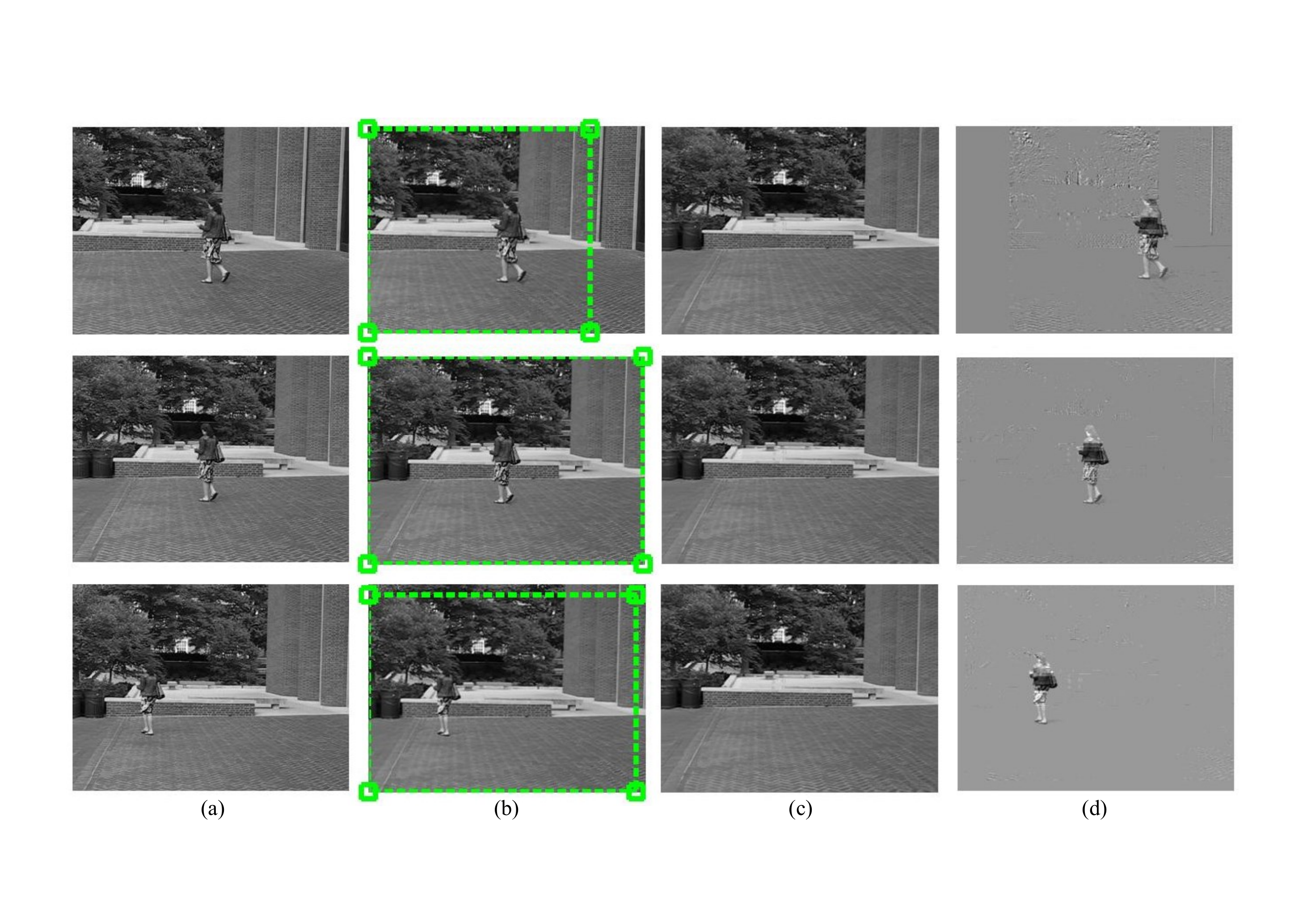}
%\end{center}
    \caption{Video reconstruction of SVD-free algorithm, a sequence with moving camera~\cite{Moseg2010Malik}. (a) Original video for frames 1, 20, and 40. (b) Reconstruction results with $L + S$. The marked green region corresponds to the recovered rank-$1$ background across the whole sequence with the alignment procedure described (i.e. transformed images) with motion parameters. (c) Motion-compensated extracted background $L$. (d) Motion-compensated extracted foreground $S$.}
\label{fig:SVDfreeReconstComp}
%\label{fig:onecol}
\end{figure*}
%------------------------------------------------------------------------
%------------------------------------------------------------------------
%------------------------------------------------------------------------
\begin{figure*}[!t] %0.46\textwidth %0.9
%\begin{center}
\centering
%\fbox{\includegraphics[trim =1.5cm 3.1cm 1.5cm 4cm, clip = true,width=0.45\textwidth]{fig_pdf/fig5_ComparisonJunction.pdf}}
\includegraphics[trim =1.5cm 3.5cm 1.5cm 4cm, clip = true,width=0.80\textwidth]{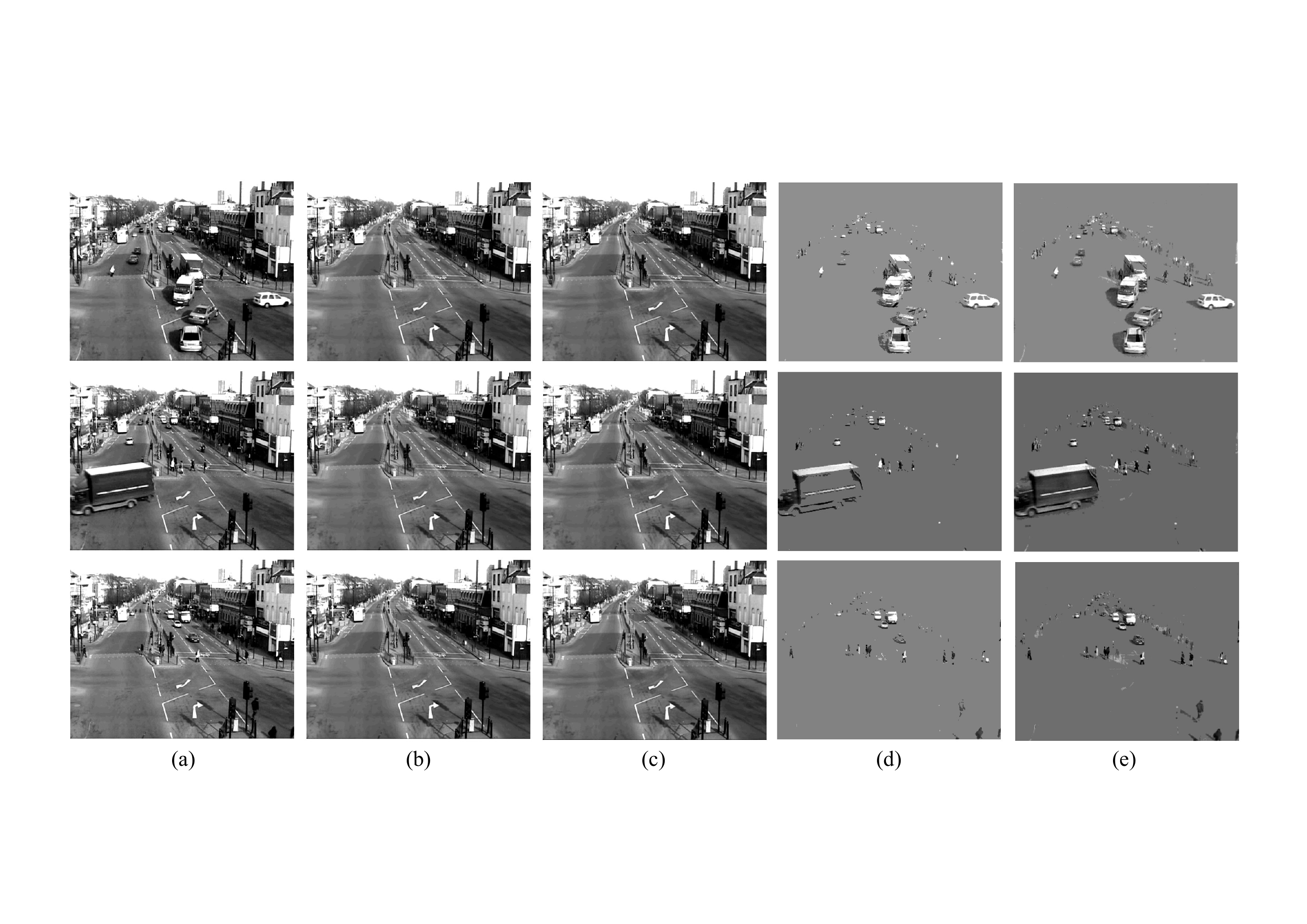}
%\end{center}
   \caption{Segmentation results comparison. Sequence from QMUL Junction dataset. (a) Original frames, (b) Low-rank GoDec, (c) Low-rank ours, (d) Sparse GoDec, (e) Sparse ours. Notice the quality of segmentation and details recovered by our model.}
\label{fig:SegCompJunc}
%\label{fig:onecol}
\end{figure*}
\begin{figure*}[!t]
\centering
\begin{subfigure}{.32\textwidth}
  \centering
  \includegraphics[trim =3.1cm 4.8cm 10.6cm 1cm, clip = true,width=1\linewidth]{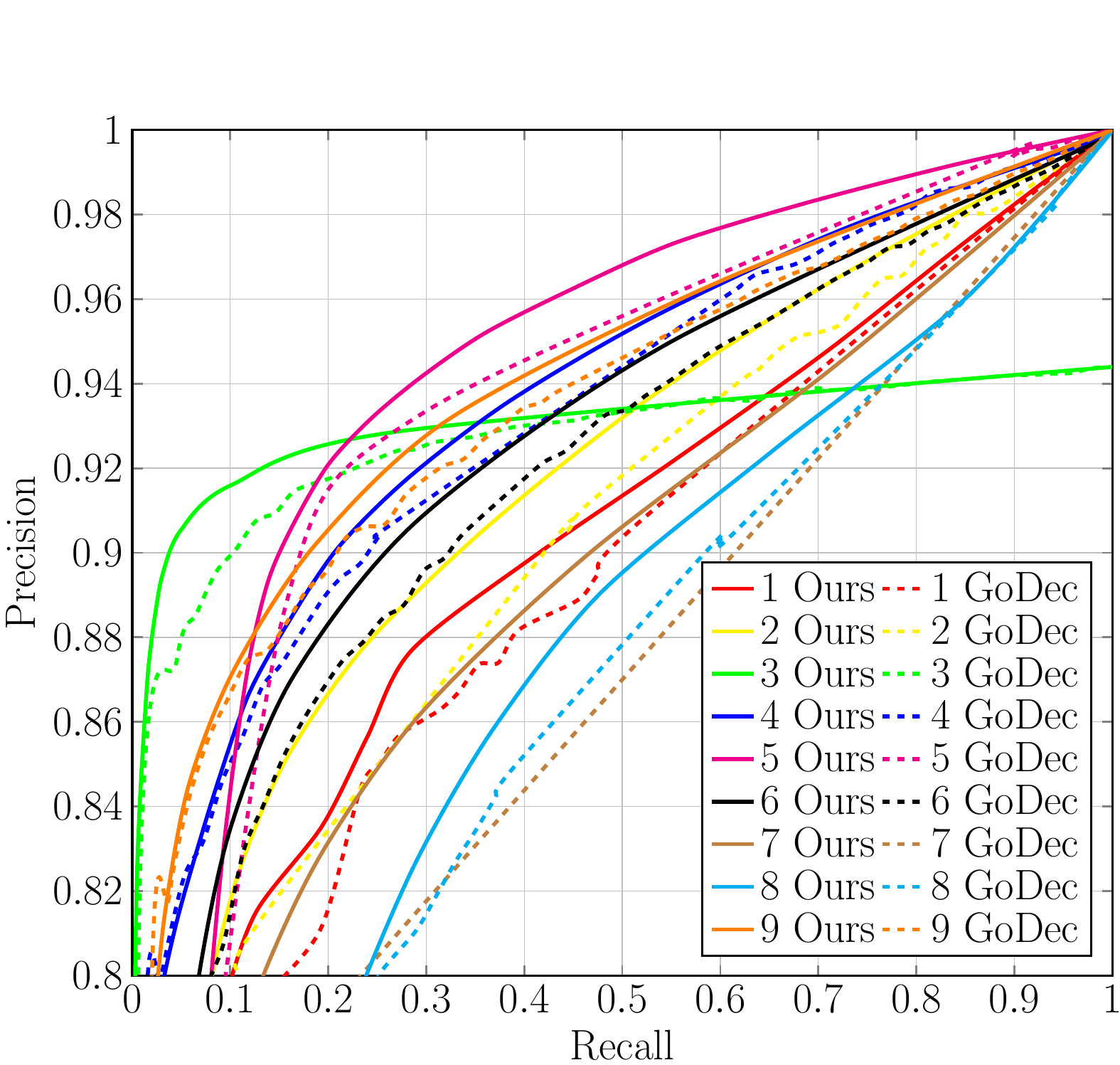}
  \caption{{\footnotesize BMC}}
  \label{fig:sfig1}
\end{subfigure}
\begin{subfigure}{.32\textwidth}
  \centering
  \includegraphics[trim =3.1cm 4.8cm 10.6cm 1cm, clip = true,width=1\linewidth]{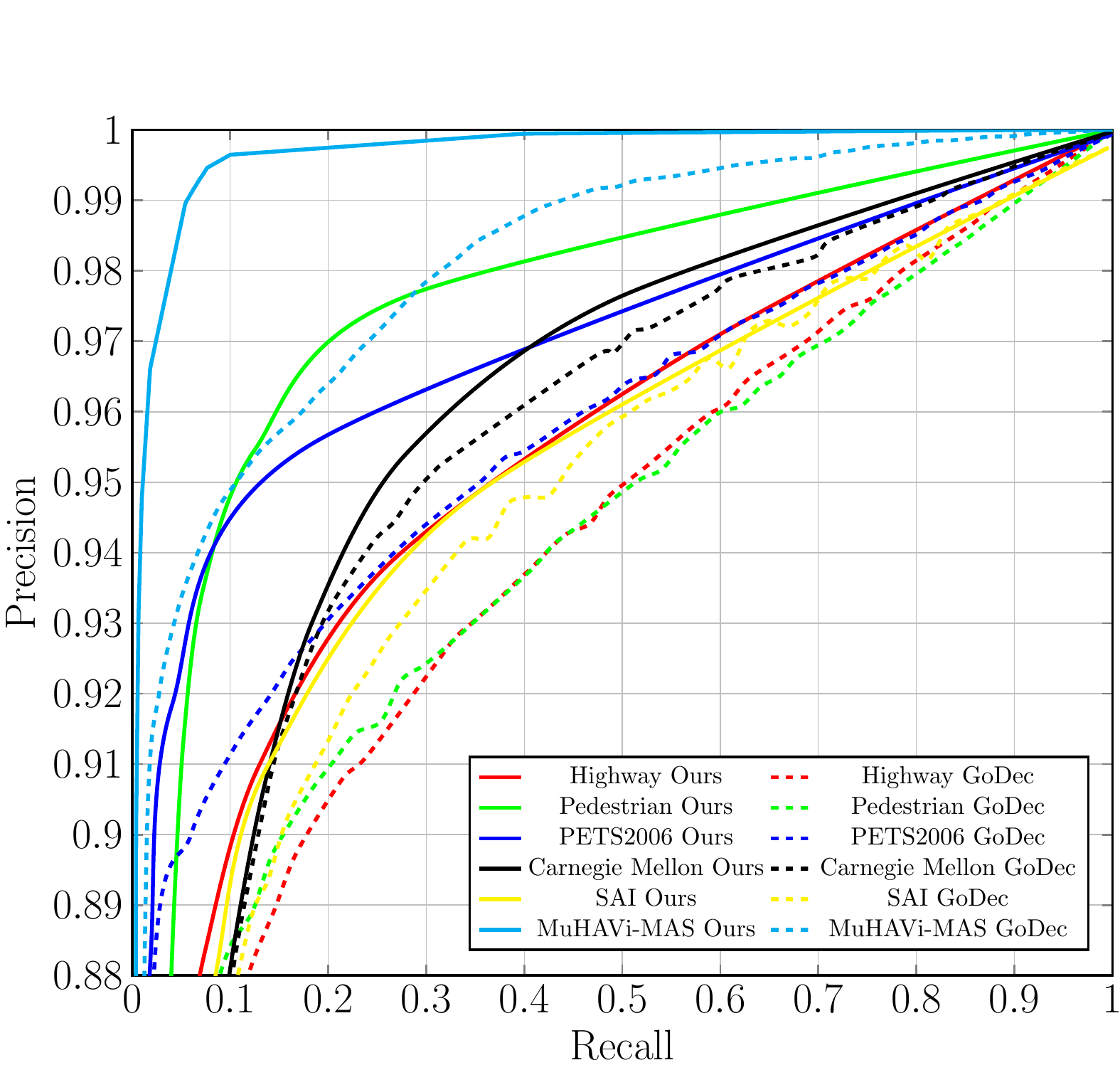}
  \caption{{\footnotesize CDW, CM, SAI, and MuHAVi-MAS}}
  \label{fig:sfig2}
\end{subfigure}
\begin{subfigure}{.32\textwidth}
  \centering
  \includegraphics[trim =3.1cm 4.8cm 10.6cm 1cm, clip = true,width=1\linewidth]{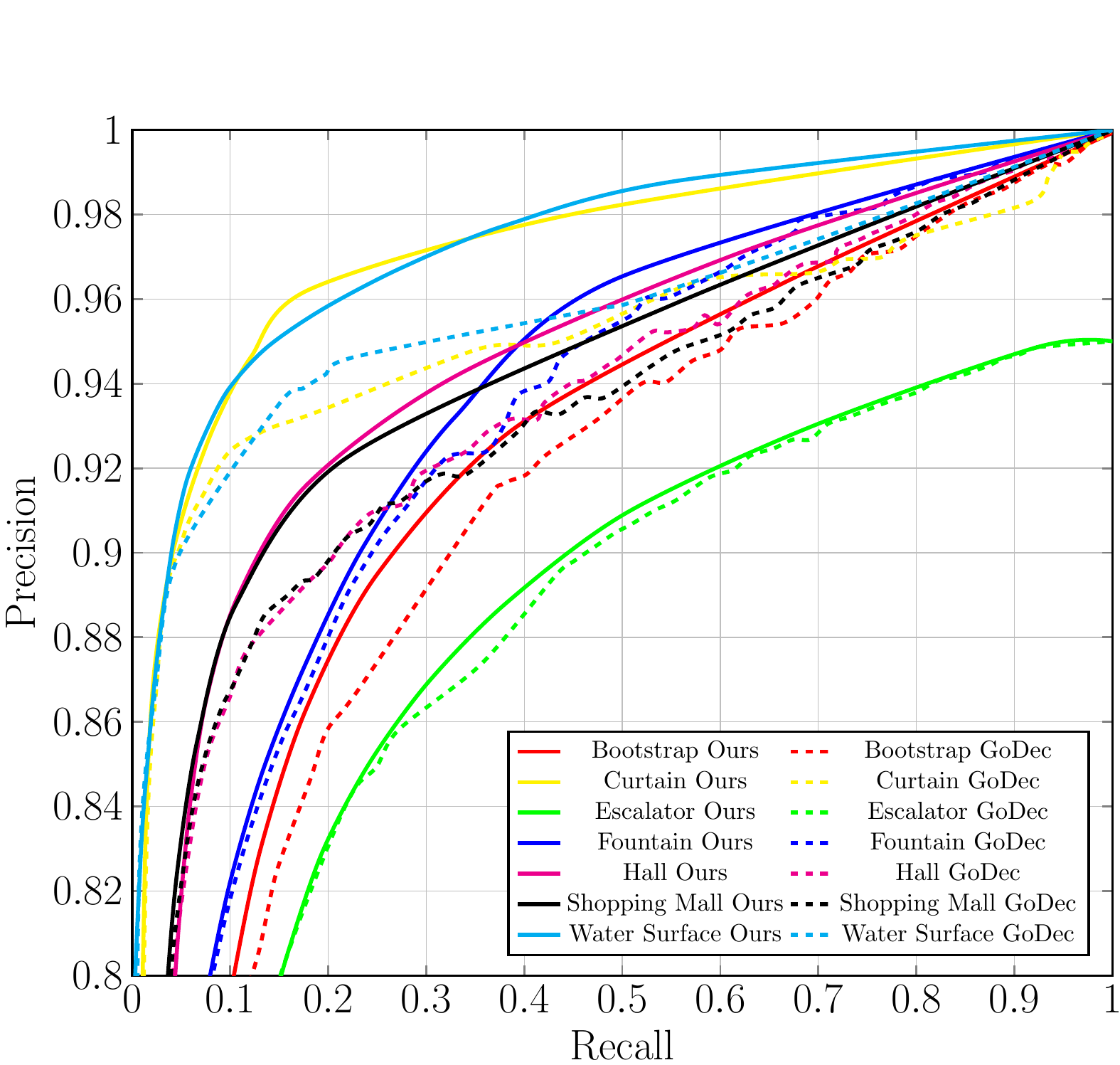}
  \caption{{\footnotesize i2R}}
  \label{fig:sfig3}
\end{subfigure}
\caption{Receiver Operating Characteristic (ROC) curves for the performance of our method vs. GoDec with varying thresholds.}
\label{fig:Precision-Recall}
\end{figure*}
%------------------------------------------------------------------------
%------------------------------------------------------------------------
%------------------------------------------------------------------------

\subsection{Removing Shadows and Specularities and Aligning Face Images}
Our model can be extended to applications such as piece-wise face image alignment. Given $n$ images $I_1, \dots, I_n$ of an object which are misaligned the optimization problem is solved in such a way that the resulting images $I_1 \circ \tau_1, \dots, I_n \circ \tau_n$ are well-aligned. This application has been proposed in a work by Peng \textit{et al.}~\cite{PengiGaneshWrightXu12} where they used the original RPCA formulation. Due to the limitations demonstrated previously, and time consuming convergence of RPCA-PCP, the algorithm is not suitable for real-time performance. Following the proposed optimization problem the individual images are regarded as frames of a video sequence concatenated in the data matrix $A$. The results will yield a decomposition in which $A \circ \tau$ describes the aligned faces in a canonical frame, the low-rank matrix $L$ that describes an eigen representation of a face of a person clear of corruptions and misalignment, and the matrix $S + G$ that contains the collective corruptions, specularities, shadows and noise. Figures~\ref{fig:Alignment},~\ref{fig:AverageBarack}, and~\ref{fig:LFWAll} demonstrate the results for a set of face images taken from the Labeled Faces in the Wild (LFW) dataset~\cite{LFW2007}.

%------------------------------------------------------------------------
%--------------------------- LFW-----------------------------------------
%------------------------------------------------------------------------
\begin{figure}[!t]
\centering
\begin{subfigure}{.22\textwidth} % .22\textwidth
  \centering
  \includegraphics[trim =10.3cm 6.1cm 9.9cm 6cm, clip = true,width=1\linewidth]{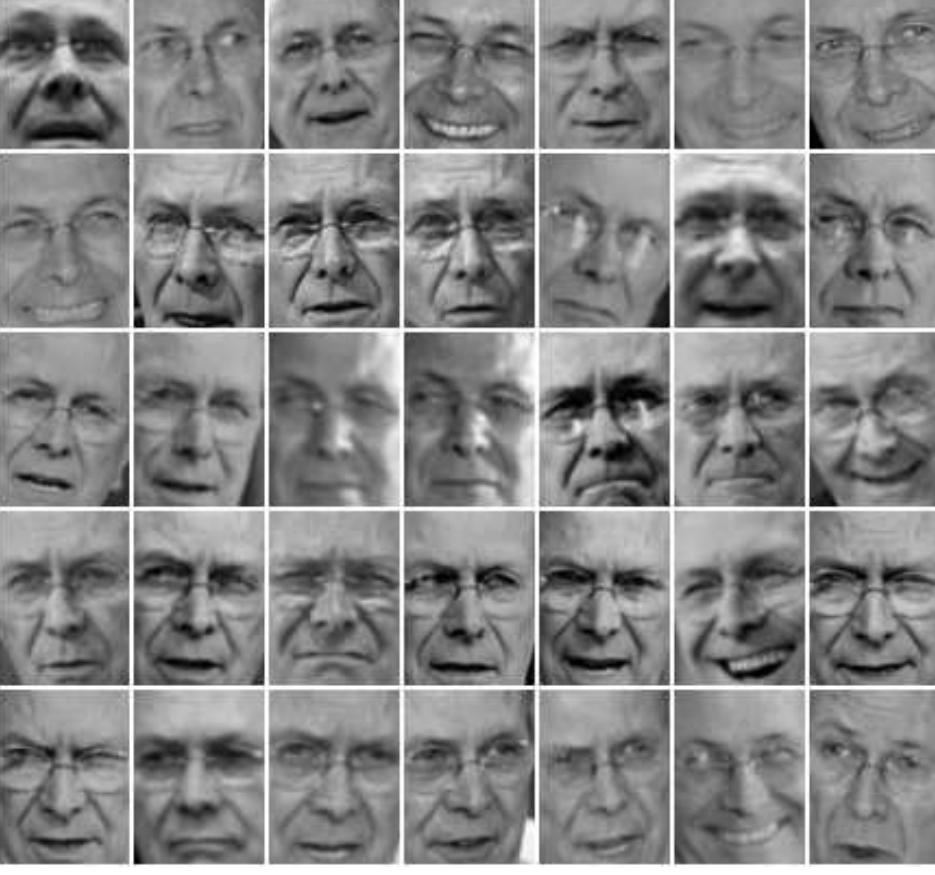}
  \caption{{\scriptsize Original images $A$}}
  \label{fig:ssfig1}
\end{subfigure}
\begin{subfigure}{.22\textwidth}
  \centering
  \includegraphics[trim =10.3cm 6.1cm 9.9cm 6cm, clip = true,width=1\linewidth]{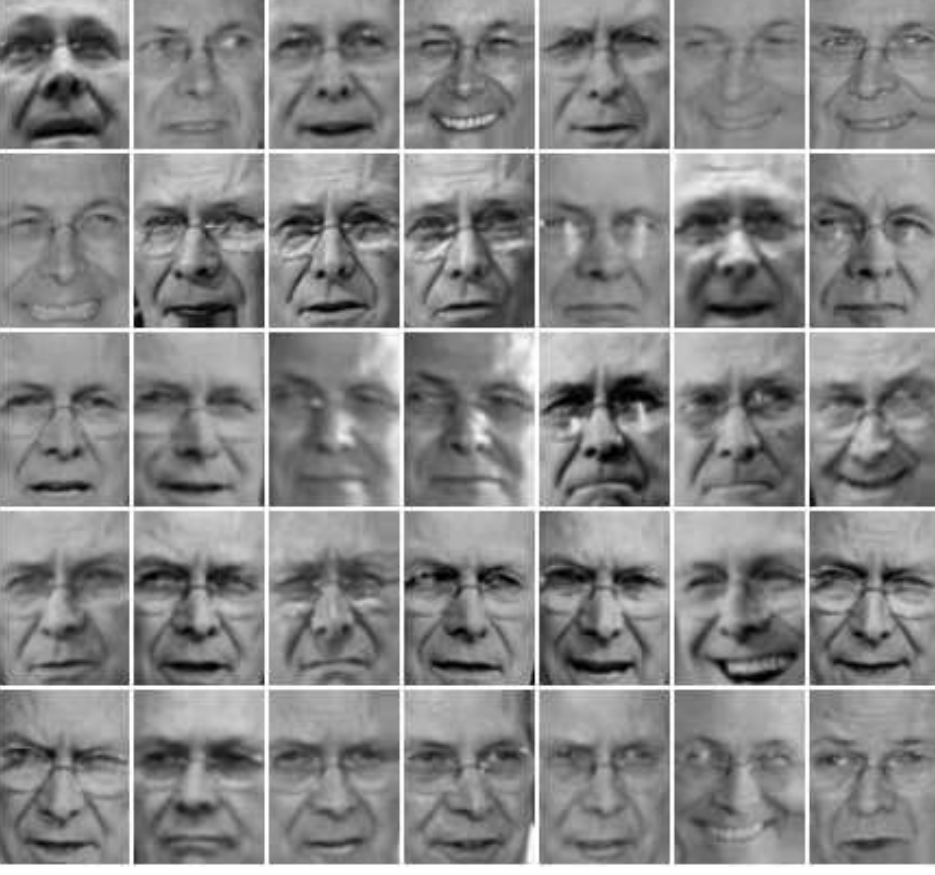}
  \caption{{\scriptsize Aligned images $A \circ \tau$}} % $A \circ \tau = L + S + G$}}
  \label{fig:ssfig2}
\end{subfigure}
\begin{subfigure}{.22\textwidth}
  \centering
  \includegraphics[trim =10.3cm 6.1cm 9.9cm 6cm, clip = true,width=1\linewidth]{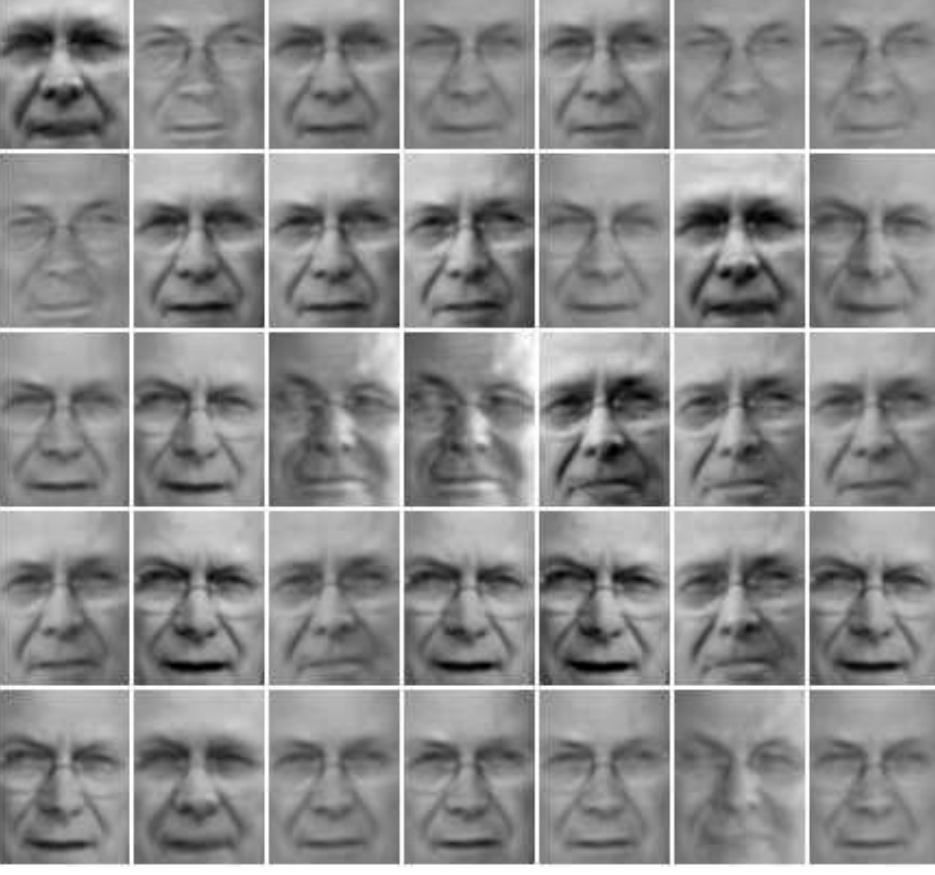}
  \caption{{\scriptsize Low-rank component $L$}}
  \label{fig:ssfig3}
\end{subfigure}
\begin{subfigure}{.22\textwidth}
  \centering
  \includegraphics[trim =10.3cm 6.1cm 9.9cm 6cm, clip = true,width=1\linewidth]{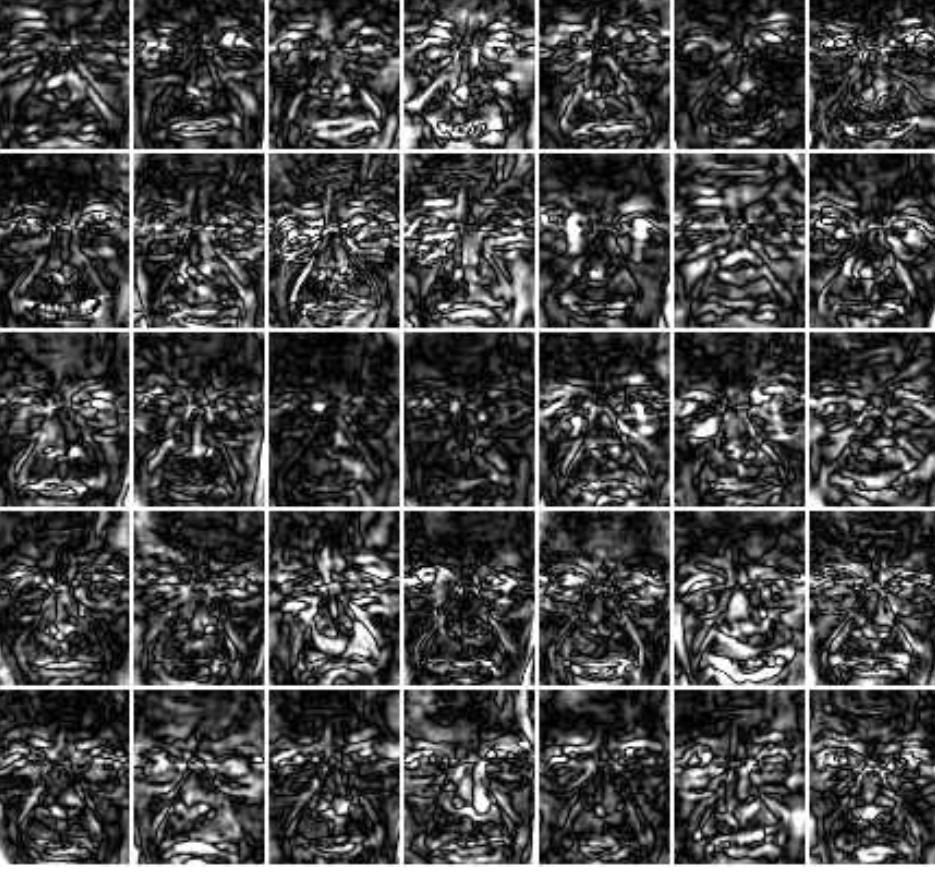} %{fig_pdf/Align_E_Donald_Rumsfeld.pdf}
  \caption{{\scriptsize Sparse specularities $S + G$}}
  \label{fig:ssfig4}
\end{subfigure}
%\begin{subfigure}{.46\textwidth}
%  \centering
%  \includegraphics[trim =9.3cm 9.2cm 9cm 8.7cm, clip = true,width=1\linewidth]{fig_pdf/Average_Results_Donald_Rumsfeld.pdf}
%  \caption{{\footnotesize Average of (a), (b), (c), and (d) respectively}}
%  \label{fig:ssfig5}
%\end{subfigure}
\begin{subfigure}{.1\textwidth} %.10\textwidth
  \centering
  \includegraphics[trim =10.6cm 5cm 10.6cm 5cm, clip = true,width=1\linewidth]{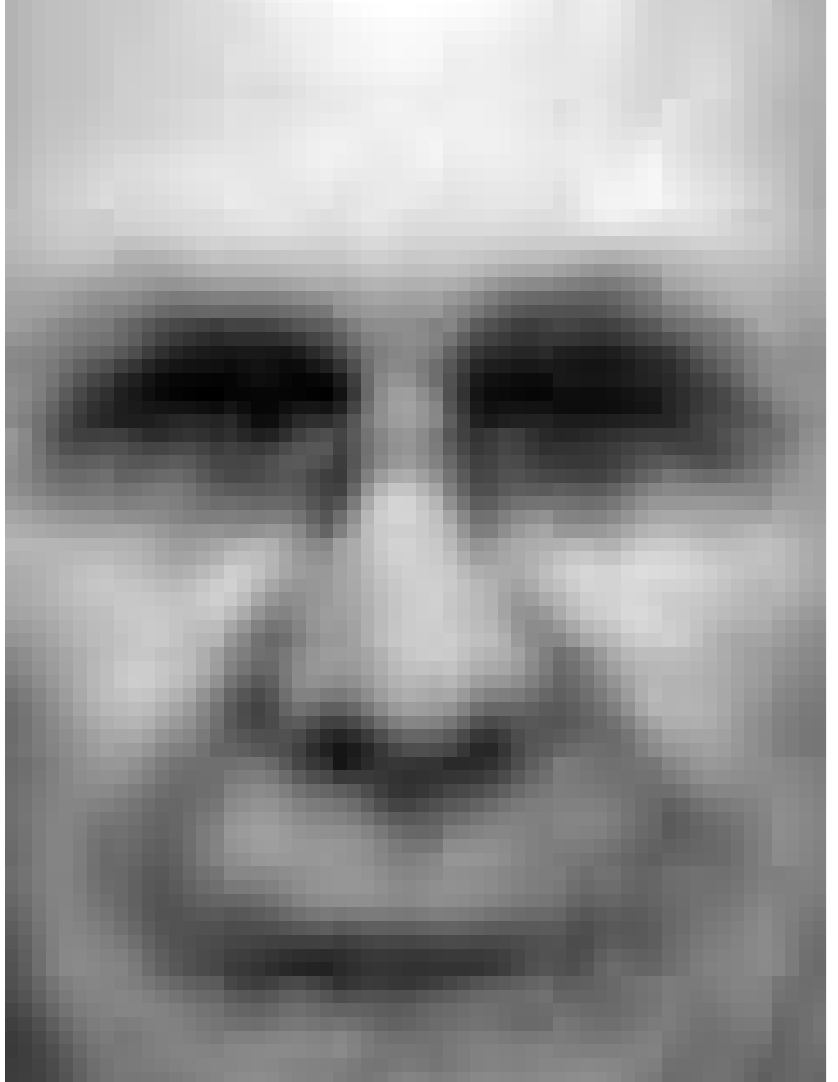}
  \caption{}%\caption{{\tiny Average of (a)}}
  \label{fig:sdfig1}
\end{subfigure}
\begin{subfigure}{.1\textwidth}
  \centering
  \includegraphics[trim =10.6cm 5cm 10.6cm 5cm, clip = true,width=1\linewidth]{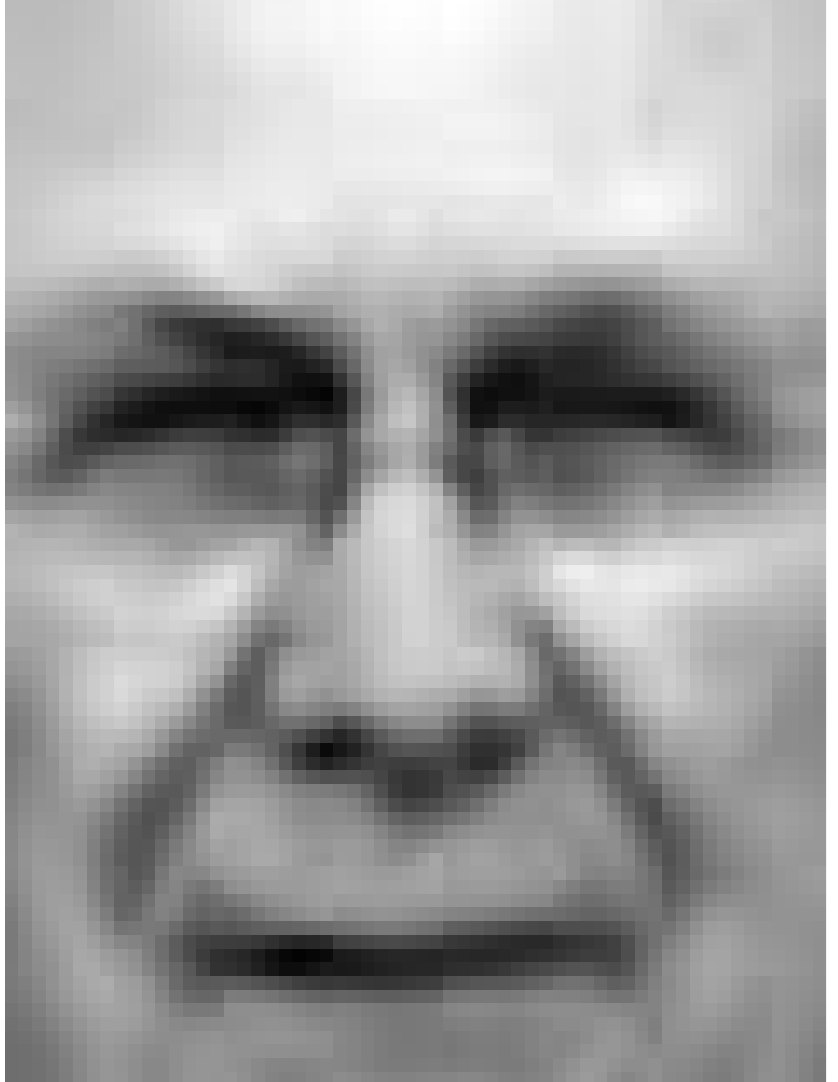}
  \caption{}%\caption{{\tiny Average of (b)}}
  \label{fig:sdfig1}
\end{subfigure}
\begin{subfigure}{.1\textwidth}
  \centering
  \includegraphics[trim =10.6cm 5cm 10.6cm 5cm, clip = true,width=1\linewidth]{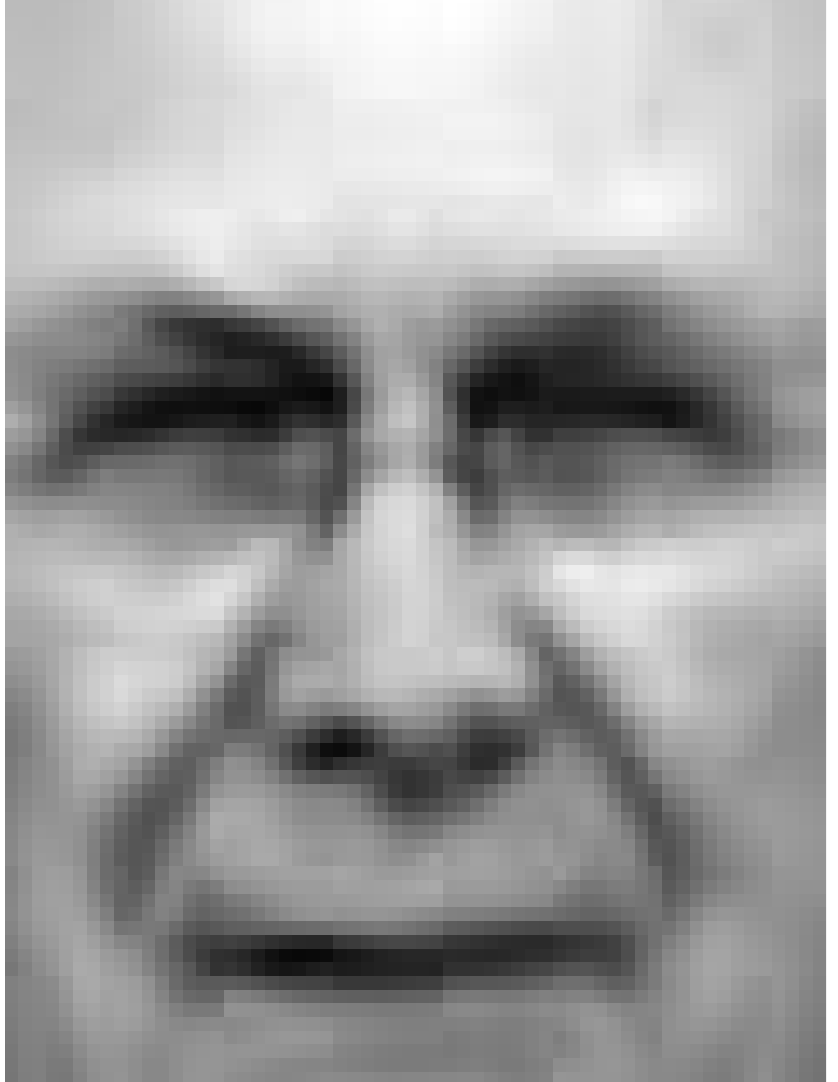}
  \caption{}%\caption{{\tiny Average of (c)}}
  \label{fig:sdfig1}
\end{subfigure}
\begin{subfigure}{.1\textwidth}
  \centering
  \includegraphics[trim =10.6cm 5cm 10.6cm 5cm, clip = true,width=1\linewidth]{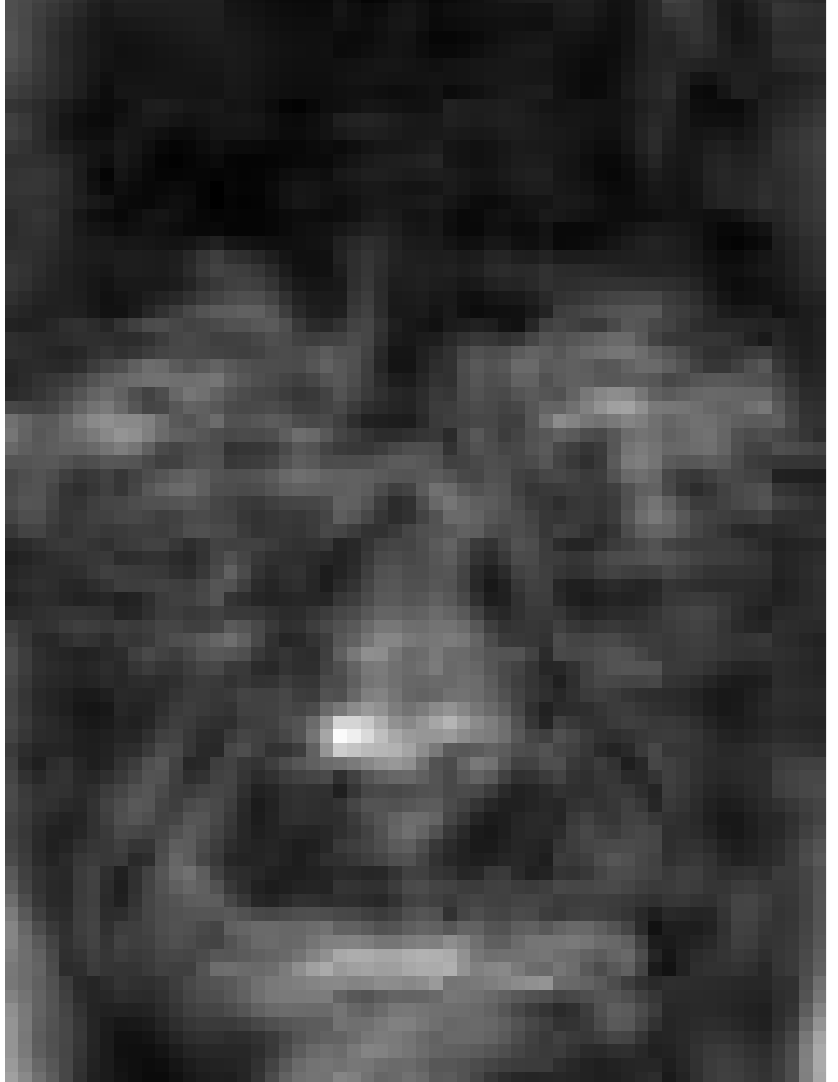}
  \caption{}%\caption{{\tiny Average of (d)}}
  \label{fig:sdfig1}
\end{subfigure}
\caption{Robust alignment by sparse and low-rank decomposition in LFW dataset~\cite{LFW2007}. Figures (e), (f), (g), and (h) correspond to the average of (a), (b), (c), and (d) respectively.}
\label{fig:Alignment}
\end{figure}
%------------------------------------------------------------------------
%------------------------------------------------------------------------
%------------------------------------------------------------------------
\begin{figure}[!t]
\centering
\begin{subfigure}{.1\textwidth} %.10\textwidth
  \centering
  \includegraphics[trim =10.6cm 5cm 10.6cm 5cm, clip = true,width=1\linewidth]{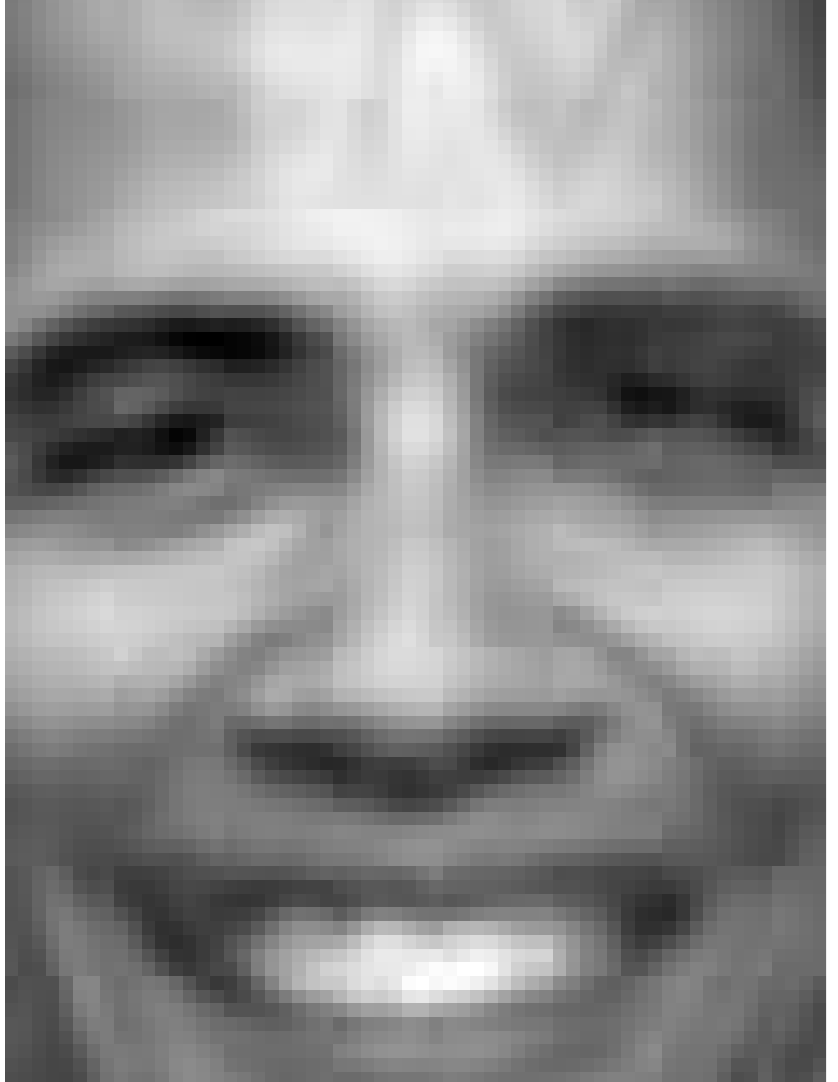}
  \caption{}
  \label{fig:sbfig1}
\end{subfigure}
\begin{subfigure}{.1\textwidth}
  \centering
  \includegraphics[trim =10.6cm 5cm 10.6cm 5cm, clip = true,width=1\linewidth]{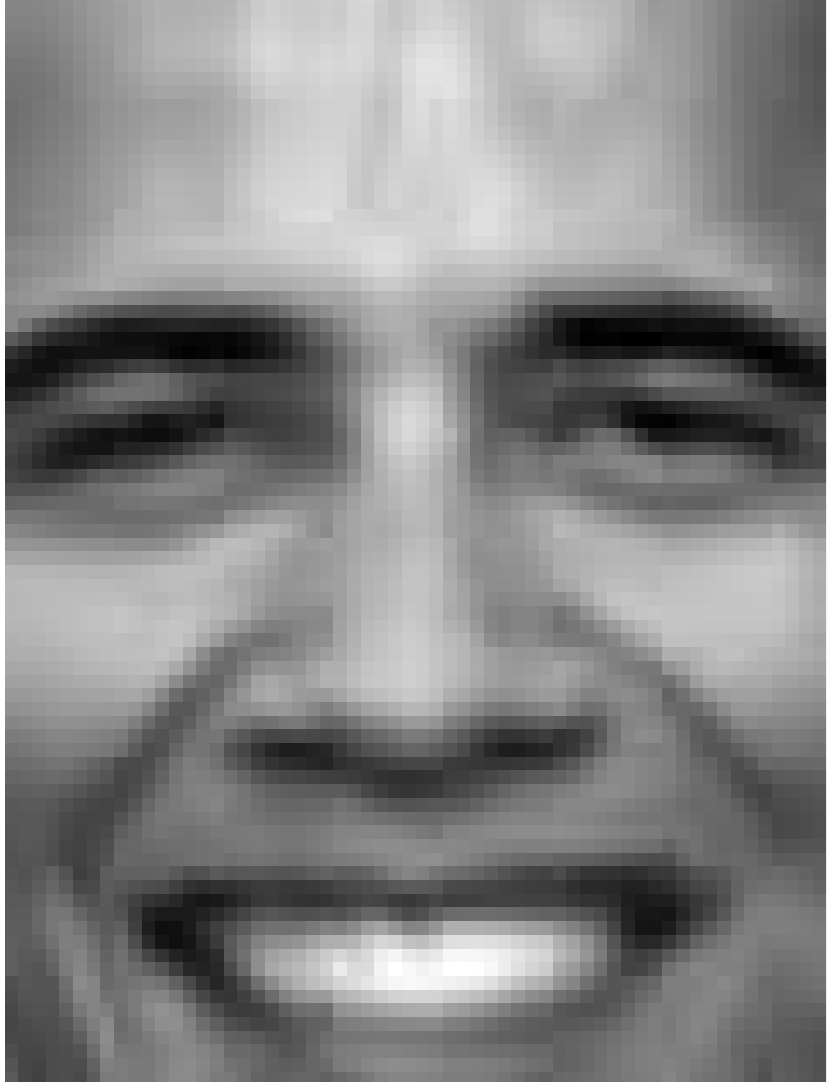}
  \caption{}
  \label{fig:sbfig1}
\end{subfigure}
\begin{subfigure}{.1\textwidth}
  \centering
  \includegraphics[trim =10.6cm 5cm 10.6cm 5cm, clip = true,width=1\linewidth]{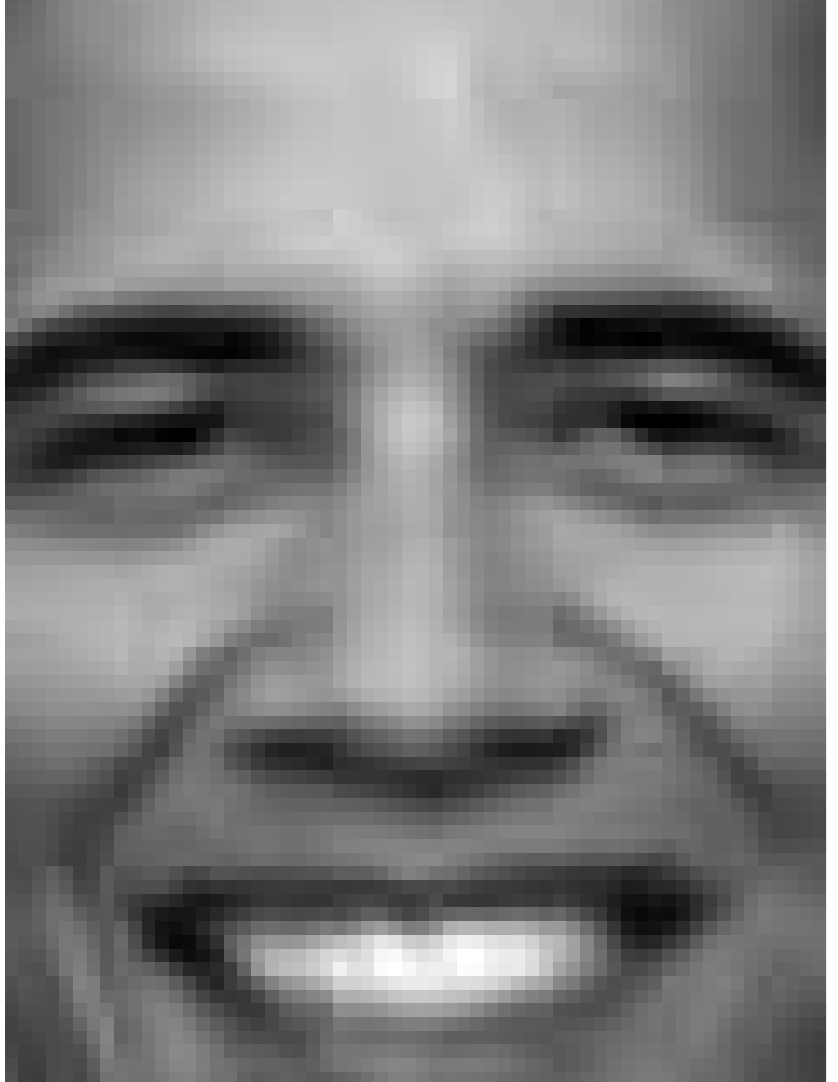}
  \caption{}
  \label{fig:sbfig1}
\end{subfigure}
\begin{subfigure}{.1\textwidth}
  \centering
  \includegraphics[trim =10.6cm 5cm 10.6cm 5cm, clip = true,width=1\linewidth]{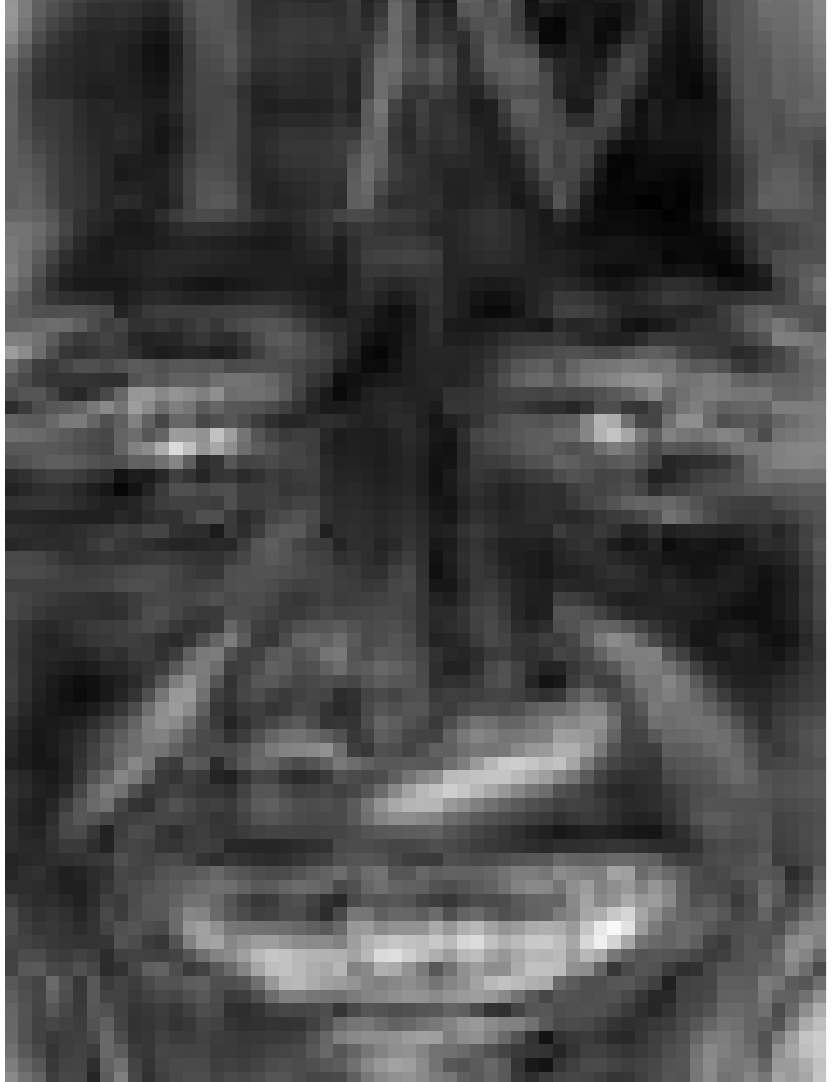}
  \caption{}
  \label{fig:sbfig1}
\end{subfigure}
\caption{Removing shadows and corruptions on faces from LFW dataset~\cite{LFW2007}. (a), (b), (c), and (d) correspond to average of: original images, aligned images, low-rank component, and sparse specularities respectively.}
\label{fig:AverageBarack}
\end{figure}
%------------------------------------------------------------------------
%--------------------------- LFW all-------------------------------------
%------------------------------------------------------------------------
\begin{figure}[!t]
\centering
\begin{subfigure}{.23\textwidth} %.23\textwidth
  \centering
  \includegraphics[trim =6.8cm 9.6cm 6.2cm 9.4cm, clip = true,width=1\linewidth]{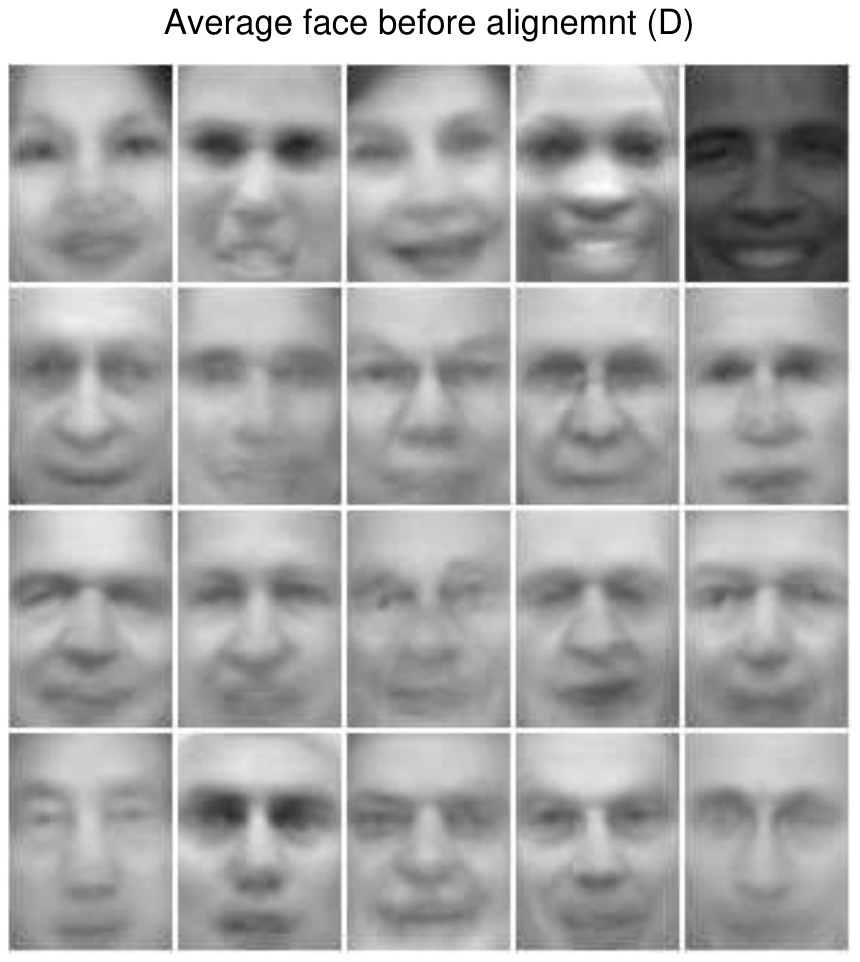}
  \caption{}
  \label{fig:sssfig1}
\end{subfigure}
\begin{subfigure}{.23\textwidth}
  \centering
  \includegraphics[trim =6.8cm 9.6cm 6.2cm 9.4cm, clip = true,width=1\linewidth]{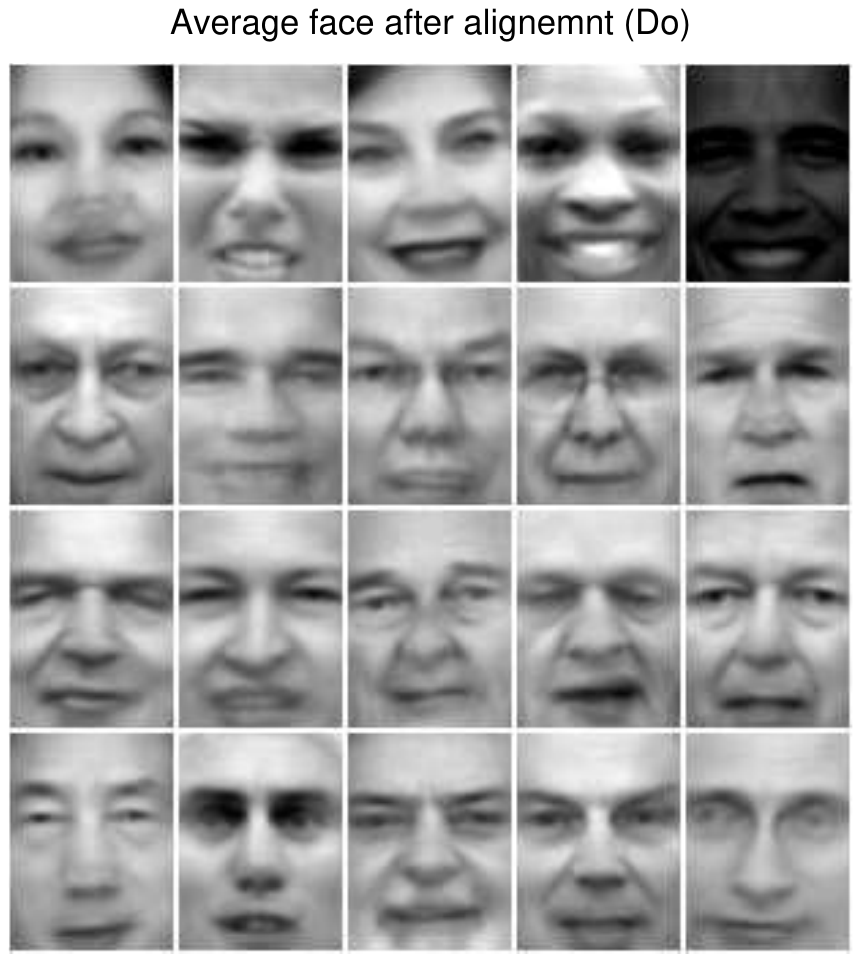}
  \caption{}
  \label{fig:sssfig2}
\end{subfigure}
\caption{Face alignment in large datasets. Average faces (a) before and (b) after alignment in LFW dataset~\cite{LFW2007} for 35 images per subject.}
\label{fig:LFWAll}
\end{figure}

%-------------------------------------------------------------------------------------------
\section{Discussion and Future Work} \label{sec:conc}
In this article an improved approximated RPCA algorithm was proposed that aims to solve general scene background subtraction, as well as a novel SVD-free solution to the optimization problem for fast computation. The solutions presented in this paper aim to simultaneously solve some of the persistent issues that arise with RPCA-based methods in foreground/background segmentation of general video sequences. The proposed method can handle camera movement, various foreground object sizes, and slow-moving foreground pixels as well as sudden and gradual illumination changes in a scene. The qualitative and quantitative segmentation results outperform current state-of-the-art methods. The proposed SVD-free solution achieves more than double the amount of speed-up in computation time for the same performance target compared to its counterparts. In future the authors would like to move towards more unconstrained cases where the captured video could have any motion, parametric transformation, quality, motion blur, or deformation of scene elements. There is an ongoing work to compose a background library from the sequences solely for reconstruction purposes in video coding applications. In addition, the authors are currently studying the possibility of a multi-variable minimization problem based on the proposed original model, in order to incorporate more features for more robust detection and better reconstruction.

\ifCLASSOPTIONcaptionsoff
  \newpage
\fi

% trigger a \newpage just before the given reference
% number - used to balance the columns on the last page
% adjust value as needed - may need to be readjusted if
% the document is modified later
%\IEEEtriggeratref{8}
% The "triggered" command can be changed if desired:
%\IEEEtriggercmd{\enlargethispage{-5in}}

% references section

% can use a bibliography generated by BibTeX as a .bbl file
% BibTeX documentation can be easily obtained at:
% http://www.ctan.org/tex-archive/biblio/bibtex/contrib/doc/
% The IEEEtran BibTeX style support page is at:
% http://www.michaelshell.org/tex/ieeetran/bibtex/

\bibliographystyle{IEEEtran}
% argument is your BibTeX string definitions and bibliography database(s)
%\bibliography{IEEEabrv,../bib/paper}
\bibliography{mybib}
\balance

\vfill

%
% <OR> manually copy in the resultant .bbl file
% set second argument of \begin to the number of references
% (used to reserve space for the reference number labels box)
%\begin{thebibliography}{1}
%
%\bibitem{IEEEhowto:kopka}
%H.~Kopka and P.~W. Daly, \emph{A Guide to \LaTeX}, 3rd~ed.\hskip 1em plus
%  0.5em minus 0.4em\relax Harlow, England: Addison-Wesley, 1999.
%
%\end{thebibliography}

% biography section
% 
% If you have an EPS/PDF photo (graphicx package needed) extra braces are
% needed around the contents of the optional argument to biography to prevent
% the LaTeX parser from getting confused when it sees the complicated
% \includegraphics command within an optional argument. (You could create
% your own custom macro containing the \includegraphics command to make things
% simpler here.)
%\begin{IEEEbiography}[{\includegraphics[width=1in,height=1.25in,clip,keepaspectratio]{mshell}}]{Michael Shell}
% or if you just want to reserve a space for a photo:

%\begin{IEEEbiography}[{\includegraphics[width=1in,height=1.25in,clip,keepaspectratio]{11d0c73.jpg}}]{Salehe Erfanian Ebadi}
%Biography text here.
%\end{IEEEbiography}
\begin{IEEEbiography}[{\includegraphics[width=1in,height=1.25in,clip,keepaspectratio]{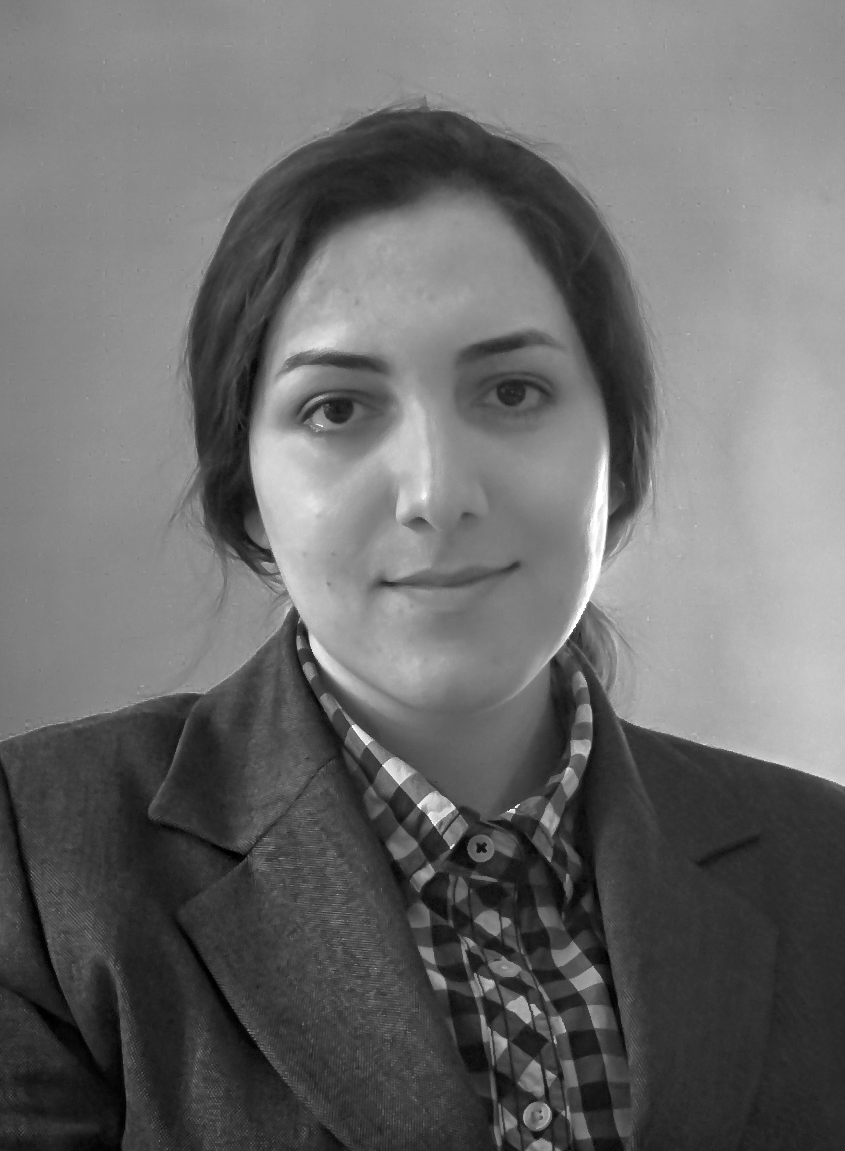}}]{Salehe Erfanian Ebadi}
received her B.Sc. degree in Telecommunications Electrical Engineering from Sadjad University of Technology, Mashhad, Iran in 2011, and her M.Sc. degree in Digital Signal Processing from Queen Mary University of London, London, U.K. in 2012. She is currently pursuing a Ph.D. degree in Electronic Engineering at the Multimedia and Vision Group, School of Electronic Engineering and Computer Science, Queen Mary University of London. Her research interests include Image and Video Processing, Matrix Decomposition Techniques for Computer Vision, and Machine Learning.
\end{IEEEbiography}

%% if you will not have a photo at all:
%\begin{IEEEbiographynophoto}{Valia Guerra Ones}
%Biography text here.
%\end{IEEEbiographynophoto}
% if you will not have a photo at all:
\begin{IEEEbiography}[{\includegraphics[width=1in,height=1.25in,clip,keepaspectratio]{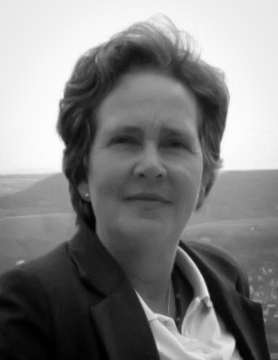}}]{Valia Guerra Ones}
Valia Guerra Ones received the M.S. and Ph.D. degrees in applied mathematics from the University of Havana,  Havana, Cuba, in 1995 and 1998, respectively. Currently, she is with the Department of Electronic Engineering, Queen Mary University of London, London, U.K., and she is visiting professor in the Department of Applied Mathematics, Delft University of Technology, Delft, The Netherlands.
Her research interest include numerical linear algebra and its applications in Image Processing, randomized matrix algorithms and ill-posed problems, and regularization methods.
\end{IEEEbiography}

% insert where needed to balance the two columns on the last page with
% biographies
%\newpage

\begin{IEEEbiography}[{\includegraphics[width=1in,height=1.25in,clip,keepaspectratio]{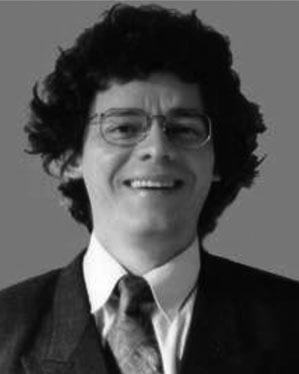}}]{Ebroul Izquierdo}
(SM'03) received the M.Sc. degree in 1988 from Berlin, Germany, the C.Eng. degree in 1999 from London, U.K., and the Dr. Rerum Naturalium (Ph.D.) from the Humboldt University, Berlin, Germany, in the field of numerical approximation of algebraic-differential equations. He is a Chair of Multimedia and Computer Vision and the Head of the Multimedia and Vision Group in the School of Electronic Engineering and Computer Science at Queen Mary, University of London, London, U.K. He was a Senior Researcher at the Heinrich-Hertz Institute for Communication Technology, Berlin, Germany, and the Department of Electronic Systems Engineering of the University of Essex. He holds several patents in the area of multimedia signal processing and has published more than 500 technical papers including chapters in books. Prof. Izquierdo is a Chartered Engineer, a Fellow of The Institution of Engineering and Technology (IET), a member of the British Machine Vision Association, Past Chairman of the IET professional network on Information Engineering, a member of the Visual Signal Processing and Communication Technical Committee of the IEEE Circuits and Systems Society and a member of the Multimedia Signal Processing technical committee of the IEEE. He is or has been associated and Guest Editor of several relevant journals in the field including the IEEE TRANSACTIONS ON CIRCUITS AND SYSTEMS FOR VIDEO TECHNOLOGY, the EURASIP Journal on Image and Video processing, the Elsevier Journal Signal Processing: Image Communication, The EURASIP Journal on Applied Signal Processing, the IEEE Proceedings on Vision, Image and Signal Processing, the Journal of Multimedia Tools and Applications and the Journal of Multimedia. He has been a member of the organizing committee of several conferences and workshops in the field and has chaired special sessions and workshops in International Conference on Image Processing, International Conference on Acoustics, Speech, and Signal Processing, and International Symposium on Circuits and Systems.
\end{IEEEbiography}

% You can push biographies down or up by placing
% a \vfill before or after them. The appropriate
% use of \vfill depends on what kind of text is
% on the last page and whether or not the columns
% are being equalized.

%\vfill

% Can be used to pull up biographies so that the bottom of the last one
% is flush with the other column.
%\enlargethispage{-5in}

% that's all folks
\end{document}